\title{Stochastic diagonal estimation with adaptive parameter selection\thanks{version \today.
\funding{This work was funded by the Natural Science Foundation of China (12271047); Guangdong Provincial Key Laboratory of Interdisciplinary Research and Application for Data Science, BNU-HKBU United International College (2022B1212010006); UIC research grant(UICR0400008-21; UICR04202405-21); Guangdong College Enhancement and Innovation Program(2021ZDZX1046).}}}
\author{
Zongyuan HAN\thanks{Laboratory of Mathematics and Complex Systems, Ministry of Education;  School of Mathematical Sciences, Beijing Normal University, Beijing 100875, P.R. China.}
\and Wenhao Li\thanks{Hong Kong Baptist University, Hong Kong, P.R. China;  Guangdong Provincial Key Laboratory of Interdisciplinary Research and Application for Data Science, BNU-HKBU United International College. Zhuhai, 519087}
\and Shengxin ZHU\thanks{Research Center for Mathematics, Beijing Normal University, Zhuhai 519087, P.R. China; Guangdong Provincial Key Laboratory of Interdisciplinary Research and Application for Data Science, BNU-HKBU United International College, Zhuhai 519087, P.R. China (\email Shengxin.Zhu@bnu.edu.cn).}}
\begin{document}
\maketitle
 
\begin{abstract}
In this paper, we investigate diagonal estimation for large or implicit matrices, aiming to develop a novel and efficient stochastic algorithm that incorporates adaptive parameter selection. We explore the influence of different eigenvalue distributions on diagonal estimation and analyze the necessity of introducing the projection method and adaptive parameter optimization into the stochastic diagonal estimator. Based on this analysis, we derive a lower bound on the number of random query vectors needed to satisfy a given probabilistic error bound, which forms the foundation of our adaptive stochastic diagonal estimation algorithm. Finally, numerical experiments demonstrate the effectiveness of the proposed estimator for various matrix types, showcasing its efficiency and stability compared to other existing stochastic diagonal estimation methods.

\end{abstract}

\begin{keywords}
    Stochastic diagonal estimation, adaptive parameter selection, probabilistic error bound, implicit matrix
\end{keywords}

\begin{MSCcodes}
    65C05 $\cdot$ 65D32 $\cdot$ 65F15 $\cdot$ 65F60 $\cdot$ 65G99 $\cdot$ 65Y20 $\cdot$ 68Q10 $\cdot$ 68Q87
\end{MSCcodes}

\section{Introduction}\label{sec1}

For a matrix $\boldsymbol{A}\in \mathbb{R}^{n\times n}$, its trace is given by
\begin{equation} 
\mathrm{tr}(\boldsymbol{A}) = \sum_{i=1}^{n} [\boldsymbol{A}]_{i,i},
\end{equation}
where $[\boldsymbol{A}]_{i,i}$ denotes the $i$-th diagonal element of the matrix $\boldsymbol{A}$. This shows that by estimating the diagonal of $\boldsymbol{A}$, we can effectively approach the problem of matrix trace estimation, which is central to many applications, such as machine learning \cite{han2015large,2007Information}, scientific computing \cite{bai1996some,kalantzis2013accelerating}, statistics \cite{aune2014parameter}, computational biology \cite{estrada2000characterization}, computational physics \cite{andoni2015sketching}, etc.

Estimating the diagonal of a matrix not only facilitates the trace calculation but also has extensive applications across various scientific and engineering disciplines. For example, in statistics, leverage scores in subset selection can be derived from the diagonal of the projection matrix corresponding to the column space of the selected column vectors \cite{sobczyk2021estimating}. In risk management, the diagonal of the inverse covariance matrix is used to assess the confidence level in data quality \cite{bekas2009low}. In network science, subgraph centrality, which ranks the importance of network nodes, is measured by the diagonal of the matrix exponential of the adjacency matrix \cite{estrada2005subgraph}. In electronic structure calculations, the diagonal of the projection matrix corresponding to the minimum eigenvector of the Hamiltonian matrix is commonly computed \cite{bekas2007estimator}. Additionally, diagonal preconditioners are crucial in speeding up the convergence of iterative solvers for linear systems \cite{wathen2015preconditioning}. More recently, diagonal estimation has been applied to accelerate second-order optimization techniques in machine learning \cite{yao2021adahessian}. These applications highlight the critical role of diagonal estimation in solving numerous practical problems.

Standard methods for diagonal estimation include matrix decomposition techniques or evaluating quadratic forms, such as
\begin{equation} 
\label{eq:quadrature_estimator}
    [\boldsymbol{A}]_{i,i} = \boldsymbol{e}_{i}^{\mathrm{T}} \boldsymbol{A} \boldsymbol{e}_{i}, 
\end{equation}
where $\boldsymbol{e}_{i}\in \mathbb{R}^{n}$ is the $i$-th standard basis vector in $\mathbb{R}^{n}$. However, these approaches become computationally prohibitive or infeasible when the matrix $\boldsymbol{A}$ is large or implicit.

In recent years, stochastic methods have provide an efficient alternative, offering diagonal approximations through random sampling at significantly lower computational costs. Bekas et al. \cite{bekas2007estimator} introduced a Monte Carlo-based stochastic diagonal estimation method, defining the following stochastic diagonal estimator,
\begin{equation} \label{eq:diagonal_estimator}
    \mathrm{EST}_{\mathrm{diag}(\boldsymbol{A})}^{m} = \left[\sum_{i=1}^{m} \boldsymbol{\omega}^{(i)} \odot \boldsymbol{A} \boldsymbol{\omega}^{(i)}\right] \oslash \left[\sum_{i=1}^{m} \boldsymbol{\omega}^{(i)} \odot \boldsymbol{\omega}^{(i)}\right], \end{equation}
where $\boldsymbol{\omega}^{(i)}\in \mathbb{R}^{n}, i=1,2,\cdots,m$ are random query vectors\footnote{There are different types of query vectors, such as Rademacher random vectors, Gaussian random vectors, or sparse Rademacher random vectors.}, and $\mathrm{EST}_{\mathrm{diag}(\boldsymbol{A})}^{m}$ represents the estimate of $\mathrm{diag}(\boldsymbol{A})$ obtained using $m$ query vectors. Based on this method, several scalable approaches have been proposed \cite{laeuchli2016methods, kaperick2019diagonal, tang2012probing}, but more theoretical developments in this area are demanded.

Baston et al. \cite{baston2022stochastic} provided a theoretical analysis of the stochastic diagonal estimator defined in Eq. \eqref{eq:diagonal_estimator}, offering the first lower bound for the number of query vectors $m$ required to guarantee a desired relative error with a specified probability. Dharangutte et al. \cite{dharangutte2023tight} improved this result, presenting a more compact asymptotic bound for $m$. Recently, Hallman et al. \cite{hallman2023monte} examined stochastic diagonal estimators with query vectors from more general distributions, such as random vectors with bounded fourth moments or sparse Rademacher vectors with tunable sparsity, analyzing their error distributions using matrix concentration inequalities. These findings provide a robust theoretical framework for further advancements in matrix diagonal estimation.

\emph{Contributions.}  This paper demonstrates, through both numerical experiments and theoretical analysis, that applying the projection method to stochastic diagonal estimation can effectively improve computational efficiency. We also derive a more compact lower bound for the number of random query vectors required to achieve specified error and probability thresholds. Furthermore, we propose a new stochastic diagonal estimation algorithm that adaptively adjusts the size of the projection subspace and the number of query vectors, making it more suitable for estimating diagonals of different matrix types in practice.

\emph{Organization.} 
The paper is organized as follows: \Cref{sec:section2} provides theoretical and experimental insights into how projection-based methods enhance diagonal estimation. \Cref{sec:section3} focuses on the design of a stochastic diagonal estimation algorithm with adaptive parameter selection. In \Cref{sec:section4}, we evaluate the performance of the proposed algorithm. Finally, we summarize the key results in the conclusion.

\section{Diagonal estimation based on projection}
\label{sec:section2}
In this section, we elucidate the importance of projection methods in estimating matrix diagonal.

For any matrix $\boldsymbol{A}\in \mathbb{R}^{n\times n}$, Baston and Nakatsukasa \cite{baston2022stochastic} have proven that with the error tolerance $0<\varepsilon <1$ and probability $0<\delta <1$, when the number of query vectors $m$ in Eq. \eqref{eq:diagonal_estimator} satisfies 
\begin{equation}
\label{eq:m_bound_by_Yuji}
    m=O\left(\frac{\log(n/\delta)}{\varepsilon^{2}}\right),
\end{equation}
the following probability inequality holds
\begin{equation}
\label{eq:Yuji_result}
    \mathbb{P}\left\{\|\mathrm{EST}_{\mathrm{diag}(\boldsymbol{A})}^{m}-\mathrm{diag}(\boldsymbol{A})\|_{2}\leq \varepsilon\|\boldsymbol{A}_{\text{off}}\|_{F}\right\}\geq 1-\delta,
\end{equation}
where $\boldsymbol{A}_{\text{off}}$ represents the matrix obtained by setting all the diagonal of $\boldsymbol{A}$ to zero, i.e., $\|\boldsymbol{A}_{\text{off}}\|_{F}^{2}=\|\boldsymbol{A}\|_{F}^{2}-\|\mathrm{diag}(\boldsymbol{A})\|_{2}^{2}$.

\subsection{Preliminary Analysis}
\label{subsection:adaptive_diagonal_experiment}
Consider a symmetric positive semi-definite matrix $\boldsymbol{A}\in \mathbb{R}^{n\times n}$, whose eigenvalues form a vector $\boldsymbol{\lambda}$ in non-decreasing order, i.e., $\boldsymbol{\lambda}=\left[\lambda_{1},\lambda_{2},\cdots,\lambda_{n}\right]^{\mathrm{T}}$ and $\lambda_{1}\geq \lambda_{2}\geq \cdots \geq \lambda_{n}$. Thus, $\|\boldsymbol{A}\|_{F}^{2}=\sum_{i=1}^{n}\lambda_{i}^{2}=\|\boldsymbol{\lambda}\|_{2}^{2}$ and
\begin{align*}
	\left\|\mathrm{diag}(\boldsymbol{A})\right\|_{2}^{2} & =[\boldsymbol{A}]_{1,1}^{2}+[\boldsymbol{A}]_{2,2}^{2}+\ldots+[\boldsymbol{A}]_{n,n}^{2}  \\
	& \geq \frac{1}{n}\left(\sum_{i=1}^{n}[\boldsymbol{A}]_{i,i}\right)^{2} \\
	& = \frac{1}{n}\left(\mathrm{tr}(\boldsymbol{A})\right)^{2} = \frac{1}{n}\left\|\boldsymbol{\lambda}\right\|_{1}^{2}.
\end{align*}

Combining Eq. \eqref{eq:Yuji_result}, we can conclude that when the number of query vectors $m$ satisfies the condition in Eq. \eqref{eq:m_bound_by_Yuji}, the following holds
\begin{equation}
	\label{eq:diagonal_estimator_variant}
	\mathbb{P}\left\{\left\|\mathrm{EST}^{m}_{\mathrm{diag}(\boldsymbol{A})}-\mathrm{diag}(\boldsymbol{A})\right\|_{2}^{2}\leq \varepsilon^{2}\left(\left\|\boldsymbol{\lambda}\right\|_{2}^{2}-\frac{\left\|\boldsymbol{\lambda}\right\|_{1}^{2}}{n}\right) \right\}\geq 1-\delta.
\end{equation}
Without loss of generality, assuming the eigenvalues of matrix $\boldsymbol{A}$ are distributed within the interval $[a,b]$. For the same number of query vectors, one can roughly consider that the smaller the interval length $|b-a|$, the more concentrated the eigenvalue distribution is. In this case, the accuracy of the diagonal estimator in Eq. \eqref{eq:diagonal_estimator_variant} is also higher. 

To verify this, we consider the following examples. Let $n=1000$, and let $\boldsymbol{U}\in \mathbb{R}^{n\times n}$ be an arbitrary orthogonal matrix. We synthesize three matrices with different eigenvalue distribution intervals as follows \footnote{$\mathrm{randi}(\cdot)$ denotes the function for generating random integers in MATLAB.},
\begin{itemize}
	\item $\boldsymbol{A}_{1} = \boldsymbol{U}\boldsymbol{D}_{1}\boldsymbol{U}^{\mathrm{T}}$, where $\boldsymbol{D}_{1}=\mathrm{diag}(\mathrm{randi}([1800,2000],n,1)).$
	\item $\boldsymbol{A}_{2} = \boldsymbol{U}\boldsymbol{D}_{2}\boldsymbol{U}^{\mathrm{T}}$, where $\boldsymbol{D}_{2}=\mathrm{diag}(\mathrm{randi}([1000,2000],n,1)).$
	\item $\boldsymbol{A}_{3} = \boldsymbol{U}\boldsymbol{D}_{3}\boldsymbol{U}^{\mathrm{T}}$, where $\boldsymbol{D}_{3}=\mathrm{diag}(\mathrm{randi}([0,2000],n,1)).$
\end{itemize}

Gaussian random query vectors are generated and used in the stochastic diagonal estimator (Eq. \eqref{eq:diagonal_estimator}) to estimate the diagonal of the matrices $\boldsymbol{A}_{i},i=1,2,3$. We also measure the discrepancy between the estimated values and the exact values through the following formula,
\begin{equation*}
    \frac{\left\|\mathrm{EST}^{m}_{\mathrm{diag}(\boldsymbol{A}_{i})}-\mathrm{diag}(\boldsymbol{A}_{i})\right\|_{2}}{\left\|\mathrm{diag}(\boldsymbol{A}_{i})\right\|_{2}},\quad i \in \left\{ 1,2,3\right\}.
\end{equation*}

\begin{figure}[htp]
	\centering
	\includegraphics[width=.5\textwidth]{./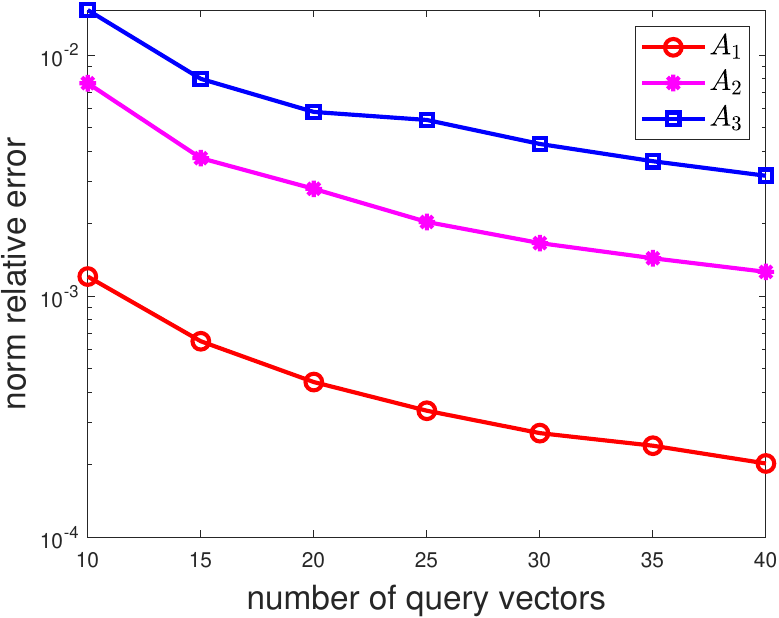}
	\caption{Impact of eigenvalue concentration on the accuracy of diagonal estimation based on Eq. \eqref{eq:diagonal_estimator}}
	\label{fig:spectral_concentrate_affect}
\end{figure}

The experimental results are presented in Figure~\ref{fig:spectral_concentrate_affect}. As the eigenvalues concentrate, the accuracy of the diagonal estimation improves accordingly. This observation leads to the natural approach of identifying an appropriate projection subspace, which projects out the portion of the matrix $\boldsymbol{A}$ associated with larger magnitudes of eigenvalues and then estimates the diagonal only for the remaining. That approach can be expressed as
\begin{equation}
	\label{eq:reducation_variance_diag}
	\mathrm{diag}(\boldsymbol{A}) = \mathrm{diag}(\boldsymbol{A}\boldsymbol{Q}\boldsymbol{Q}^{\mathrm{T}})+\mathrm{diag}(\boldsymbol{A}(\boldsymbol{I}-\boldsymbol{Q}\boldsymbol{Q}^{\mathrm{T}})),
\end{equation}
where $\boldsymbol{I}$ is the $n$-dimensional identity matrix, and $\boldsymbol{Q}\in \mathbb{R}^{n\times k}$ is column-orthogonal, meaning $\boldsymbol{Q}\boldsymbol{Q}^{\mathrm{T}}=\boldsymbol{I}$. We refer to the method, expressed in Eq. \eqref{eq:reducation_variance_diag}, as the projection-based diagonal estimation.

Since $\boldsymbol{Q}$ in Eq. \eqref{eq:reducation_variance_diag} is low-rank, the term $\mathrm{diag}(\boldsymbol{A}\boldsymbol{Q}\boldsymbol{Q}^{\mathrm{T}})$ can be computed efficiently with a small number of matrix-vector multiplications. This leaves only the estimation of the second term, $\mathrm{diag}(\boldsymbol{A}(\boldsymbol{I}-\boldsymbol{Q}\boldsymbol{Q}^{T}))$, which can be approximated by using the stochastic diagonal estimator given by Eq. \eqref{eq:diagonal_estimator}.

Next, we investigate how the decay rate of matrix eigenvalues affects the diagonal estimation using the projection method described in Eq. \eqref{eq:reducation_variance_diag}. Let the matrix $\boldsymbol{A}\in \mathbb{R}^{n\times n}$ be decomposed as $\boldsymbol{A}=\boldsymbol{U}\boldsymbol{D}\boldsymbol{U}^{\mathrm{T}}$, where $\boldsymbol{U}\in \mathbb{R}^{n\times n}$ is a randomly generated orthogonal matrix, and the diagonal matrix $\boldsymbol{D}\in \mathbb{R}^{n\times n}$ is constructed using four different decay profiles\footnote{These decay profiles are consistent with the matrix construction method in \cite{epperly2024xtrace}.}, with $n=1000$.
\begin{itemize}
    \item $flat$: $\boldsymbol{D}=\mathrm{diag}(3-2(i-1)/(n-1), \text{for}\,\,i= 1:n)$.
	\item $poly$: $\boldsymbol{D} = \mathrm{diag}(i^{-2}, \text{for}\,\,i=1:n)$.
	\item $exp$: $\boldsymbol{D}=\mathrm{diag}(0.7^{i}, \text{for}\,\,i=0:(n-1))$.
	\item $step$: $\boldsymbol{D}=\mathrm{diag}([\underbrace{1,\ldots,1}_{50},\underbrace{10^{-3},\ldots,10^{-3}}_{n-50}).$
\end{itemize}
Let $\boldsymbol{\Omega}_{1},\boldsymbol{\Omega}_{2},\boldsymbol{\Omega}_{3}\in \mathbb{R}^{n\times k}$ represent three Gaussian random matrices, with $k=20,40,60$ columns respectively. Define $ \boldsymbol{Q}_{i}=\mathrm{orth}(\boldsymbol{A}\boldsymbol{\Omega}_{i})$, so that $\boldsymbol{Q}_{i}$ forms a set of orthogonal bases for the column space of $\boldsymbol{A}\boldsymbol{\Omega}_{i}$.

For simplicity, we denote $\boldsymbol{A}(\boldsymbol{I}-\boldsymbol{Q}_{i}\boldsymbol{Q}_{i}^{\mathrm{T}})$ as $\boldsymbol{R}_{i}$ for $i=1,2,3$. We estimate $\boldsymbol{R}_{i}$ using the stochastic diagonal estimator defined in Eq. \eqref{eq:diagonal_estimator}, with the relative error measured by
\begin{equation*}
	\frac{\left\|\mathrm{EST}^{m}_{\mathrm{diag}(\boldsymbol{R}_{i})}-\mathrm{diag}(\boldsymbol{R}_{i})\right\|_{2}}{\left\|\mathrm{diag}(\boldsymbol{A})\right\|_{2}}.
\end{equation*}

\begin{figure}[ht]
    \subfigure[flat]{
    \begin{minipage}[t]{0.45\linewidth}
    \includegraphics[width=1\textwidth]{./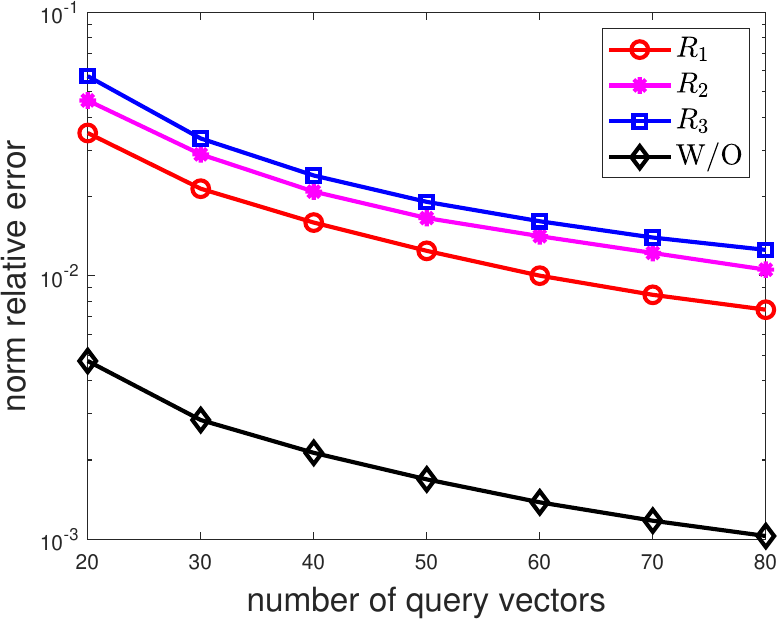}\label{fig:subfig-a}
    \end{minipage}%
    }%
    \hspace{1em}
    \subfigure[poly]{
    \begin{minipage}[t]{0.45\linewidth}
    \includegraphics[width=1\textwidth]{./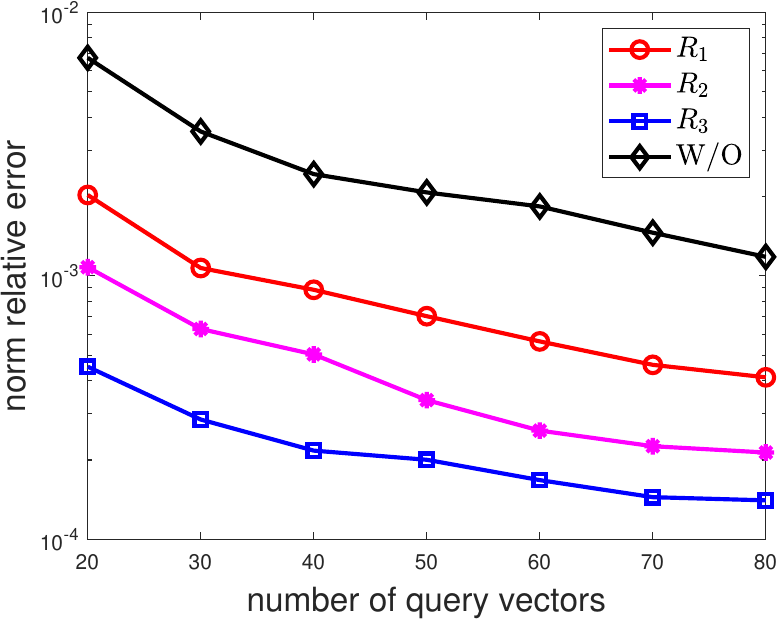}\label{fig:subfig-b}
    \end{minipage}%
    }%
    \vspace{1em}
    \subfigure[exp]{
    \begin{minipage}[t]{0.45\linewidth}
    \includegraphics[width=1\textwidth]{./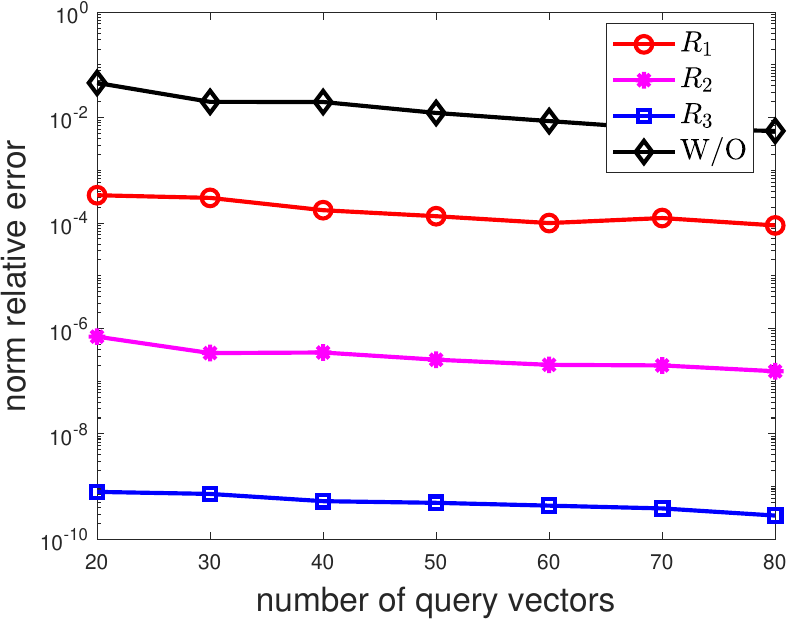}\label{fig:subfig-c}
    \end{minipage}
    }%
    \hspace{1em}
    \subfigure[step]{
    \begin{minipage}[t]{0.45\linewidth}
    \includegraphics[width=1\textwidth]{./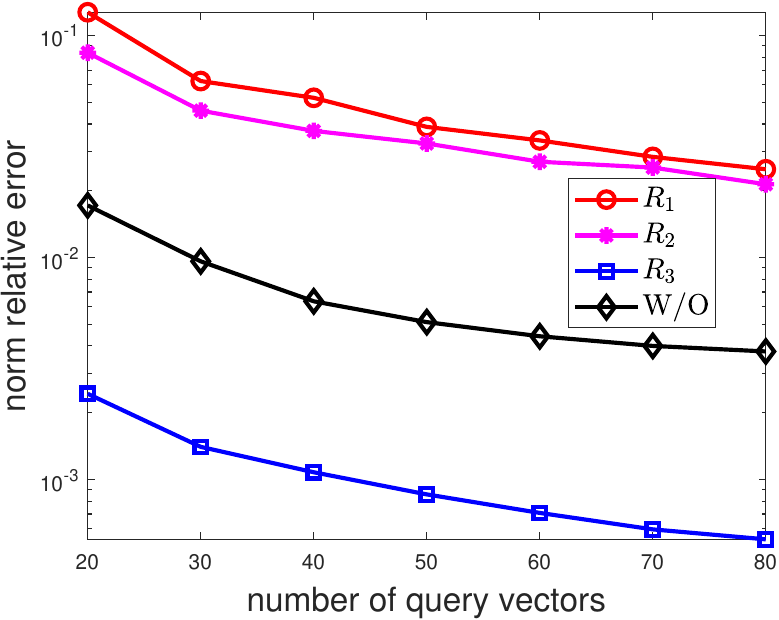}\label{fig:subfig-d}
    \end{minipage}
    }%
\centering
\caption{ Relative errors of diagonal estimation for matrices with  different eigenvalue distribution}
\label{fig:four_decay}
\end{figure}

Figure~\ref{fig:four_decay} presents the results of diagonal estimations for matrices with different eigenvalue decay rates. The label `W/O' represents the stochastic diagonal estimator from Eq. \eqref{eq:diagonal_estimator} used on matrix $\boldsymbol{A}$ directly. As seen in Figures~\ref{fig:subfig-b} and~\ref{fig:subfig-c}, when the matrix spectrum follows the $poly$ or $exp$ decay, the projection-based method significantly outperforms the stochastic diagonal estimator, with higher accuracy as the projection subspace dimension increases. 

Figure~\ref{fig:subfig-d} shows for matrices with a $step$ decay in their spectrum, the accuracy of the projection-based method depends on the projection subspace size and the number of query vectors. For example, matrices with a gap at the $50$-th eigenvalue exhibit improved accuracy only when the projection subspace dimension exceeds $50$ (as seen from $\boldsymbol{R}_{3}$). When the subspace dimension is smaller than $50$ (represented by $\boldsymbol{R}_{1}$ and $\boldsymbol{R}_{2}$), the projection-based method performs worse than the stochastic diagonal estimator. 

Conversely, as seen in Figure~\ref{fig:subfig-a}, when the matrix has a flat spectrum, the projection-based method's accuracy declines with an increasing subspace dimension, sometimes even performing worse than the stochastic diagonal estimator.

These findings suggest that the projection-based methods can enhance the accuracy of diagonal estimation in most cases. In certain scenarios, accuracy improves only when the projection subspace dimension exceeds a specific threshold. Thus, the size of projection subspace should be carefully tuned based on the matrix type to optimize diagonal estimation.

In the next section, we provide a theoretical analysis of how the projection-based methods may reduce the variance of stochastic diagonal estimators. Based on this, we will develop a stochastic diagonal estimator with adaptive parameter optimization.

\subsection{Theoretical Analysis}
Let us define $\boldsymbol{B}:=\boldsymbol{A}(\boldsymbol{I}-\boldsymbol{Q}\boldsymbol{Q}^{\mathrm{T}})$ in the projection-based matrix diagonal estimation defined in Eq. \eqref{eq:reducation_variance_diag}. By Eq. \eqref{eq:diagonal_estimator}, the diagonal of $\boldsymbol{B}$ can be estimated as,
\begin{equation}
	\label{eq:diagonal_B}
	\mathrm{EST}_{\mathrm{diag}(B)}^{m} = \left[\sum_{i=1}^{m}\boldsymbol{\omega}^{(i)}\odot \boldsymbol{B}\boldsymbol{\omega}^{(i)}\right]\oslash \left[\sum_{i=1}^{m}\boldsymbol{\omega}^{(i)}\odot \boldsymbol{\omega}^{(i)}\right].
\end{equation}

In the subsequent sections, we will theoretically analyze the projection-based stochastic diagonal estimators from two perspectives: one using the generalized stochastic diagonal estimator (defined below), and the other using the standard stochastic diagonal estimator (defined in Eq. \ref{eq:diagonal_B}).

\noindent{\textbf{Concentrated inequalities on generalized stochastic diagonal estimator}}

When the query vector $\boldsymbol{\omega}^{(i)}$ follows the Rademacher distribution, Eq. \eqref{eq:diagonal_B} is simplified to
\begin{equation}
\label{eq:general_diagonal_estimator}
\mathrm{EST}_{\mathrm{diag}(\boldsymbol{B})}^{m} = \frac{1}{m}\sum_{i=1}^{m}\boldsymbol{\omega}{(i)}\odot \boldsymbol{B}\boldsymbol{\omega}^{(i)}.
\end{equation}
Similarly, when the query vector $\boldsymbol{\omega}^{(i)}$ follows the standard Gaussian distribution, each term in $\sum_{i=1}^{m}\boldsymbol{\omega}^{(i)}\odot \boldsymbol{\omega}^{(i)}$ concentrates around $m$. Therefore, the diagonal estimator can be approximated by Eq. \eqref{eq:general_diagonal_estimator}. We refer to the estimator given in Eq. \eqref{eq:general_diagonal_estimator} as the 
\emph{generalized stochastic diagonal estimator}.

\begin{lemma}
\label{lemma:general_first_bound}
Let $\boldsymbol{B}\in \mathbb{R}^{n\times n}$ and let $\mathrm{EST}_{\mathrm{diag}(\boldsymbol{B})}^{m}$ be the estimation of $\mathrm{diag}(\boldsymbol{B})$ obtained by Eq. \eqref{eq:general_diagonal_estimator}, where the query vectors are Gaussian random vectors. For any $0<\varepsilon,\delta <1$, if the number of query vectors $m$ satisfies
\begin{equation}
	\label{eq:element_wise_m_bound}
	m\geq \frac{1}{\varepsilon^{2}\delta}\left(1+\frac{\left\|[\boldsymbol{B}]_{i,:}\right\|_{2}^{2}}{[\boldsymbol{B}]_{i,i}^{2}}\right),
\end{equation}
then 
\begin{equation}
	\mathbb{P}\left\{\left|\left(\mathrm{EST}_{\mathrm{diag}(\boldsymbol{B})}^{m}-\mathrm{diag}(\boldsymbol{B})\right)_{i}\right|\leq \varepsilon \left|[\boldsymbol{B}]_{i,i}\right|\right\}\geq 1-\delta.
\end{equation}
\end{lemma}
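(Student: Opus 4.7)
The plan is to prove the bound componentwise via a second-moment (Chebyshev) argument on the $i$-th entry of the estimator. For a fixed row index $i$, I would write
\begin{equation*}
    \bigl(\mathrm{EST}_{\mathrm{diag}(\boldsymbol{B})}^{m}\bigr)_{i} \;=\; \frac{1}{m}\sum_{j=1}^{m} X_{j}, \qquad X_{j} \;:=\; \omega_{i}^{(j)}\,\bigl(\boldsymbol{B}\boldsymbol{\omega}^{(j)}\bigr)_{i} \;=\; \sum_{k=1}^{n} [\boldsymbol{B}]_{i,k}\,\omega_{i}^{(j)}\omega_{k}^{(j)},
\end{equation*}
so that the $X_{j}$ are i.i.d. It then suffices to estimate the mean and variance of a single $X_{j}$ and apply Chebyshev.

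The mean is immediate: since the components of $\boldsymbol{\omega}^{(j)}$ are independent standard Gaussians, $\mathbb{E}[\omega_{i}\omega_{k}]=\delta_{ik}$, so $\mathbb{E}[X_{j}] = [\boldsymbol{B}]_{i,i}$. For the variance I would expand
\begin{equation*}
    \mathbb{E}[X_{j}^{2}] \;=\; \sum_{k,\ell} [\boldsymbol{B}]_{i,k}[\boldsymbol{B}]_{i,\ell}\,\mathbb{E}\!\left[\omega_{i}^{2}\omega_{k}\omega_{\ell}\right]
\end{equation*}
and evaluate the Gaussian fourth moments by Isserlis' rule: $\mathbb{E}[\omega_{i}^{4}]=3$, $\mathbb{E}[\omega_{i}^{2}\omega_{k}^{2}]=1$ for $k\neq i$, and all cross terms with an odd index vanish. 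Collecting cases $(k,\ell)=(i,i)$, $k=\ell\neq i$, and the rest, I get
\begin{equation*}
    \mathbb{E}[X_{j}^{2}] \;=\; 3\,[\boldsymbol{B}]_{i,i}^{2} \;+\; \sum_{k\neq i}[\boldsymbol{B}]_{i,k}^{2} \;=\; 2\,[\boldsymbol{B}]_{i,i}^{2} \;+\; \bigl\|[\boldsymbol{B}]_{i,:}\bigr\|_{2}^{2},
\end{equation*}
hence $\mathrm{Var}(X_{j}) = [\boldsymbol{B}]_{i,i}^{2} + \|[\boldsymbol{B}]_{i,:}\|_{2}^{2}$.

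The last step is to apply Chebyshev's inequality to the sample mean: the $i$-th entry of the estimator has variance $\mathrm{Var}(X_{j})/m$, so
\begin{equation*}
    \mathbb{P}\!\left\{\bigl|\bigl(\mathrm{EST}_{\mathrm{diag}(\boldsymbol{B})}^{m}-\mathrm{diag}(\boldsymbol{B})\bigr)_{i}\bigr| > \varepsilon\,|[\boldsymbol{B}]_{i,i}|\right\} \;\leq\; \frac{[\boldsymbol{B}]_{i,i}^{2}+\|[\boldsymbol{B}]_{i,:}\|_{2}^{2}}{m\,\varepsilon^{2}\,[\boldsymbol{B}]_{i,i}^{2}} \;=\; \frac{1}{m\varepsilon^{2}}\!\left(1+\frac{\|[\boldsymbol{B}]_{i,:}\|_{2}^{2}}{[\boldsymbol{B}]_{i,i}^{2}}\right).
\end{equation*}
Requiring the right-hand side to be at most $\delta$ rearranges exactly to the stated lower bound on $m$.

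The calculation is essentially routine; the only place where care is needed is the enumeration of the index cases in the Gaussian fourth-moment sum, since it is easy to double-count or drop the $(k,\ell)=(i,i)$ diagonal contribution that produces the extra $2[\boldsymbol{B}]_{i,i}^{2}$ term. Once that bookkeeping is correct, Chebyshev closes the argument. Note that the Lemma implicitly assumes $[\boldsymbol{B}]_{i,i}\neq 0$ (otherwise the relative-error statement is vacuous on its left-hand side); this is harmless for the intended use where $\boldsymbol{B}=\boldsymbol{A}(\boldsymbol{I}-\boldsymbol{Q}\boldsymbol{Q}^{\mathrm{T}})$.
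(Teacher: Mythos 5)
Your proposal is correct and takes essentially the same approach as the paper: both compute the variance of the $i$-th entry of the estimator (arriving at the same value $[\boldsymbol{B}]_{i,i}^{2}+\|[\boldsymbol{B}]_{i,:}\|_{2}^{2}$ via the Gaussian fourth moment) and then apply a second-moment bound, with the paper's Markov inequality applied to the squared error being identical to your direct use of Chebyshev on the zero-mean error.
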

\begin{proof}
Consider the estimation error of the $i$-th diagonal element
	\begin{equation}
		\label{eq:single_element_error}
	\left(\mathrm{EST}_{\mathrm{diag}(\boldsymbol{B})}^{m}-\mathrm{diag}(\boldsymbol{B})\right)_{i}= \frac{1}{m}\sum_{k=1}^{m}\left(\boldsymbol{\omega}^{(k)}\odot \boldsymbol{B}\boldsymbol{\omega}^{(k)}-\mathrm{diag}(\boldsymbol{B})\right)_{i}.
	\end{equation}
Let $\boldsymbol{e}^{(k)}:= \boldsymbol{\omega}^{(k)}\odot \boldsymbol{B}\boldsymbol{\omega}^{(k)}-\mathrm{diag}(\boldsymbol{B})$, then
\begin{equation*}
	\left(\boldsymbol{e}^{(k)}\right)_{i} = [\boldsymbol{B}]_{i,i}\left(\boldsymbol{\omega}^{(k)}\right)_{i}^{2} -[\boldsymbol{B}]_{i,i} +\sum_{j\neq i}^{n}[\boldsymbol{B}]_{i,j}\left(\boldsymbol{\omega}^{(k)}\right)_{i}\left(\boldsymbol{\omega}^{(k)}\right)_{j}.
\end{equation*}
Furthermore, $\mathbb{E}[\left(\boldsymbol{e}^{(k)}\right)_{i}]=0$, $\mathbb{E}[\boldsymbol{e}^{(k)}]=\mathbf{0}$.

Let $\ell_{i} : = [\boldsymbol{B}]_{i,i}(\boldsymbol{\omega}^{(k)})_{i}^{2} -[\boldsymbol{B}]_{i,i}$. Since the fourth moment of the standard normal random variable is $3$, i.e., $\mathbb{E}[(\boldsymbol{\omega}^{(k)})_{i}^{4}]=3$, then $\mathbb{E}[\ell_{i}^{2}]=2[\boldsymbol{B}]_{i,i}^{2}$. The variance of $(\boldsymbol{e}^{(k)})_{i}$ is
\begin{align*}
	\mathbb{E}\left[\left(\boldsymbol{e}^{(k)}\right)_{i}^{2}\right]& = \mathbb{E}\left[\left(\ell_{i}+\sum_{j\neq i}^{n}[\boldsymbol{B}]_{i,j}\left(\boldsymbol{\omega}^{(k)}\right)_{i}\left(\boldsymbol{\omega}^{(k)}\right)_{j}\right)^{2}\right]\\
	& = \mathbb{E}[\ell_{i}^{2}] + \mathbb{E}\left[\sum_{j\neq i}^{n}\sum_{p\neq i}^{n}[\boldsymbol{B}]_{i,j}[\boldsymbol{B}]_{i,p}\left(\boldsymbol{\omega}^{(k)}\right)_{i}^{2}\left(\boldsymbol{\omega}^{(k)}\right)_{j}\left(\boldsymbol{\omega}^{(k)}\right)_{p}\right]\\
	& = 2[\boldsymbol{B}]_{i,i}^{2}+\sum_{j\neq i}^{n}[\boldsymbol{B}]_{i,j}^{2} = [\boldsymbol{B}]_{i,i}^{2}+\left\|[\boldsymbol{B}]_{i,:}\right\|_{2}^{2},
\end{align*}
where the second equation above holds because $$\mathbb{E}\left[2\ell_{i}\sum_{j\neq i}^{n}\left(\boldsymbol{\omega}^{(k)}\right)_{i}\left(\boldsymbol{\omega}^{(k)}\right)_{j}\right]=0,$$  and the third equation holds since the expected value is non-zero only when $p=j$.

Then, we have
\begin{align*}
	\mathbb{E}\left[\left(\mathrm{EST}^{m}_{\mathrm{diag}(\boldsymbol{B})}-\mathrm{diag}(\boldsymbol{B})\right)_{i}^{2}\right] & = \mathbb{E}\left[\frac{1}{m^{2}}\left(\sum_{k=1}^{m}\left(\boldsymbol{e}^{(k)}\right)_{i}\right)^{2}\right]\\
	&= \frac{1}{m^{2}}\mathbb{E}\left[\sum_{k=1}^{m}\left(\boldsymbol{e}^{(k)}\right)_{i}^{2}+\sum_{k=1}^{m}\sum_{q\neq k}^{m}\left(\boldsymbol{e}^{(k)}\right)_{i}\left(\boldsymbol{e}^{(q)}\right)_{i}\right]\\
	& =\frac{1}{m}\left([\boldsymbol{B}]_{i,i}^{2}+\left\|[\boldsymbol{B}]_{i,:}\right\|_{2}^{2}\right),
\end{align*}
where the third equation holds as $(\boldsymbol{e}^{(k)})_{i}$ and $(\boldsymbol{e}^{(q)})_{i}$ are independent when $q\neq k$.

Based on Markov's inequality,
\begin{equation}
	\label{eq:element_wise_error_bound}
	\mathbb{P}\left\{\left|\left(\mathrm{EST}_{\mathrm{diag}(\boldsymbol{B})}^{m}-\mathrm{diag}(\boldsymbol{B})\right)_{i}\right|\geq \varepsilon \left|[\boldsymbol{B}]_{i,i}\right|\right\}\leq \frac{1}{m\varepsilon^{2}}\left(1+\frac{\left\|[\boldsymbol{B}]_{i,:}\right\|_{2}^{2}}{[\boldsymbol{B}]_{i,i}^{2}}\right).
\end{equation}
Let 
$$\delta\geq \frac{1}{m\varepsilon^{2}}\left(1+\frac{\left\|[\boldsymbol{B}]_{i,:}\right\|_{2}^{2}}{[\boldsymbol{B}]_{i,i}^{2}}\right),$$ 
then the final conclusion holds.
\end{proof}

We further extend the probability bound of the estimation error for a single diagonal element, as established in Lemma~\ref{lemma:general_first_bound}, to the probability bound of the norm error for the entire diagonal vector estimation.

\begin{corollary}\label{corollary:general_first_bound}
	Let $\boldsymbol{B}\in \mathbb{R}^{n\times n}$ and let $\mathrm{EST}_{\mathrm{diag}(\boldsymbol{B})}^{m}$ be the estimation of $\mathrm{diag}(\boldsymbol{B})$ obtained by Eq. \eqref{eq:general_diagonal_estimator}, where the query vectors are Gaussian random vectors. For any $0<\varepsilon, \delta <1$, if the number of query vectors $m$ satisfies
	\begin{equation}
		\label{eq:norm_wise_m_bound}
		m\geq \frac{1}{\varepsilon^{2}\delta}\sum_{i=1}^{n}\left(1+\frac{\left\|[\boldsymbol{B}]_{i,:}\right\|_{2}^{2}}{[\boldsymbol{B}]_{i,i}^{2}}\right),
	\end{equation}
    then 
	\begin{equation}
		\mathbb{P}\left\{\left\|\mathrm{EST}_{\mathrm{diag}(\boldsymbol{B})}^{m}-\mathrm{diag}(\boldsymbol{B})\right\|_{2}\leq \varepsilon \left\|\mathrm{diag}(\boldsymbol{B})\right\|_{2}\right\}\geq 1-\delta.
	\end{equation}
\end{corollary}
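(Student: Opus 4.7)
The plan is to bootstrap the per-coordinate error bound from Lemma~\ref{lemma:general_first_bound} to a norm-wise bound by a straightforward union-bound argument, rather than re-deriving anything from scratch. The key observation is that the corollary's sufficient condition on $m$ is \emph{exactly} the sum over $i$ of the per-coordinate conditions from Lemma~\ref{lemma:general_first_bound} (with the same $\varepsilon$), which is the signature of a union bound.

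First, I would extract from the proof of Lemma~\ref{lemma:general_first_bound} the intermediate Markov inequality \eqref{eq:element_wise_error_bound}, i.e., for every index $i$,
\begin{equation*}
\mathbb{P}\!\left\{\left|\left(\mathrm{EST}_{\mathrm{diag}(\boldsymbol{B})}^{m}-\mathrm{diag}(\boldsymbol{B})\right)_{i}\right|\geq \varepsilon\,\left|[\boldsymbol{B}]_{i,i}\right|\right\}\leq \frac{1}{m\varepsilon^{2}}\left(1+\frac{\left\|[\boldsymbol{B}]_{i,:}\right\|_{2}^{2}}{[\boldsymbol{B}]_{i,i}^{2}}\right).
\end{equation*}
Then, I would apply the union bound over $i=1,\ldots,n$ to conclude that, simultaneously for \emph{all} coordinates, $\bigl|(\mathrm{EST}_{\mathrm{diag}(\boldsymbol{B})}^{m}-\mathrm{diag}(\boldsymbol{B}))_{i}\bigr|\leq \varepsilon\,|[\boldsymbol{B}]_{i,i}|$ holds with probability at least $1-\frac{1}{m\varepsilon^{2}}\sum_{i=1}^{n}\bigl(1+\|[\boldsymbol{B}]_{i,:}\|_{2}^{2}/[\boldsymbol{B}]_{i,i}^{2}\bigr)$.

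On this good event, squaring and summing the coordinate inequalities yields
\begin{equation*}
\left\|\mathrm{EST}_{\mathrm{diag}(\boldsymbol{B})}^{m}-\mathrm{diag}(\boldsymbol{B})\right\|_{2}^{2}\leq \varepsilon^{2}\sum_{i=1}^{n}[\boldsymbol{B}]_{i,i}^{2}=\varepsilon^{2}\left\|\mathrm{diag}(\boldsymbol{B})\right\|_{2}^{2},
\end{equation*}
so taking square roots gives the stated norm-wise inequality. Finally, imposing $\frac{1}{m\varepsilon^{2}}\sum_{i=1}^{n}(1+\|[\boldsymbol{B}]_{i,:}\|_{2}^{2}/[\boldsymbol{B}]_{i,i}^{2})\leq \delta$ recovers exactly the hypothesis \eqref{eq:norm_wise_m_bound}.

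There is no serious obstacle here; the argument is essentially bookkeeping. The only subtle point worth noting is that the union bound is slightly wasteful compared with a direct second-moment Markov argument applied to $\|\mathrm{EST}_{\mathrm{diag}(\boldsymbol{B})}^{m}-\mathrm{diag}(\boldsymbol{B})\|_{2}^{2}$, which would replace $\sum_i(1+\|[\boldsymbol{B}]_{i,:}\|_{2}^{2}/[\boldsymbol{B}]_{i,i}^{2})$ by the (generally smaller) ratio $\sum_{i}([\boldsymbol{B}]_{i,i}^{2}+\|[\boldsymbol{B}]_{i,:}\|_{2}^{2})/\sum_{i}[\boldsymbol{B}]_{i,i}^{2}$; however, since the corollary as stated uses the coarser quantity, the union-bound route is the natural and cleanest path to the claimed form.
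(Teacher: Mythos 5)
Your proof is correct and follows essentially the same route as the paper: extract the per-coordinate Markov tail bound \eqref{eq:element_wise_error_bound}, apply a union bound over $i=1,\ldots,n$, and sum to obtain the norm-wise failure probability. The paper works directly with the "bad" event, noting that $\bigl\{\sum_i (\mathrm{EST}-\mathrm{diag})_i^2 \geq \varepsilon^2\sum_i [\boldsymbol{B}]_{i,i}^2\bigr\}$ is contained in $\bigcup_i \bigl\{(\mathrm{EST}-\mathrm{diag})_i^2 \geq \varepsilon^2 [\boldsymbol{B}]_{i,i}^2\bigr\}$ and then union-bounding; you take the De Morgan complement (intersection of per-coordinate good events implies the norm-wise good event), which is the identical argument. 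Your closing remark is also accurate: a direct second-moment Markov bound on $\|\mathrm{EST}-\mathrm{diag}\|_2^2$ gives $m\geq \frac{1}{\varepsilon^2\delta}\cdot\frac{\sum_i([\boldsymbol{B}]_{i,i}^2+\|[\boldsymbol{B}]_{i,:}\|_2^2)}{\sum_i[\boldsymbol{B}]_{i,i}^2}$, which is never larger than the stated $\frac{1}{\varepsilon^2\delta}\sum_i\bigl(1+\|[\boldsymbol{B}]_{i,:}\|_2^2/[\boldsymbol{B}]_{i,i}^2\bigr)$ since the former is a $[\boldsymbol{B}]_{i,i}^2$-weighted average of the summands in the latter; the paper nonetheless adopts the coarser union-bound form, matching the shape of Lemma~\ref{lemma:general_first_bound}.
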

\begin{proof}
	Based on union bound and Eq. \eqref{eq:element_wise_error_bound}, we have
	\begin{align*}
	&\mathbb{P}\left\{\left\|\mathrm{EST}^{m}_{\mathrm{diag}(\boldsymbol{B})}-\mathrm{diag}(\boldsymbol{B})\right\|_{2}\geq \varepsilon \left\|\mathrm{diag}(\boldsymbol{B})\right\|_{2}\right\} \\
	= &\mathbb{P}\left\{\left\|\mathrm{EST}^{m}_{\mathrm{diag}(\boldsymbol{B})}-\mathrm{diag}(\boldsymbol{B})\right\|_{2}^{2}\geq \varepsilon^{2}\left\|\mathrm{diag}(\boldsymbol{B})\right\|_{2}^{2}\right\}\\
		=& \mathbb{P}\left\{\sum_{i=1}^{n}\left(\mathrm{EST}^{m}_{\mathrm{diag}(\boldsymbol{B})}-\mathrm{diag}(\boldsymbol{B})\right)_{i}^{2}\geq \varepsilon^{2}\sum_{i=1}^{n}[\boldsymbol{B}]_{i,i}^{2}\right\}\\
		\leq & \sum_{i=1}^{n}\mathbb{P}\left\{\left(\mathrm{EST}^{m}_{\mathrm{diag}(\boldsymbol{B})}-\mathrm{diag}(\boldsymbol{B})\right)_{i}^{2}\geq \varepsilon^{2}[\boldsymbol{B}]_{i,i}^{2}\right\} \quad (\text{union bound})\\
		\leq & \frac{1}{m\varepsilon^{2}}\sum_{i=1}^{n}\left(1+\frac{\left\|[\boldsymbol{B}]_{i,:}\right\|_{2}^{2}}{[\boldsymbol{B}]_{i,i}^{2}}\right).
	\end{align*}
Let 
\begin{equation*}
	\delta \geq \frac{1}{m\varepsilon^{2}}\sum_{i=1}^{n}\left(1+\frac{\left\|[\boldsymbol{B}]_{i,:}\right\|_{2}^{2}}{[\boldsymbol{B}]_{i,i}^{2}}\right),
\end{equation*}
which leads to the final conclusion.
\end{proof}

By examining the lower bound on the number of query vectors in Eq. \eqref{eq:norm_wise_m_bound}, we observe that the greater the diagonal dominance of the matrix $\boldsymbol{B}$ is, the smaller the lower bound on $m$ is. Specifically, since $|[\boldsymbol{B}]_{i,i}|\leq \sigma_{\max}(\boldsymbol{B})$, the following inequality holds,
$$m\geq \frac{1}{\varepsilon^2\delta} \left(n+\frac{\left\|\boldsymbol{B}\right\|_{F}^{2}}{\sigma^{2}_{\max}(\boldsymbol{B})}\right),$$
implying that reducing the Frobenius norm of $\boldsymbol{B}$ can effectively decrease the lower bound on the number of query vectors. Additionally, since multiplying a matrix by an orthogonal projection matrix will not increase its Frobenius norm, we have $\|\boldsymbol{B}\|_{F}=\|\boldsymbol{A}(\boldsymbol{I}-\boldsymbol{Q}\boldsymbol{Q}^{\mathrm{T}})\|_{F}\leq \|\boldsymbol{A}\|_{F}$. From this perspective, the projection-based stochastic diagonal estimator can reduce the number of query vectors required to guarantee a given probabilistic error bound $(\varepsilon,\delta)$.

The results in Lemma~\ref{lemma:general_first_bound} and Corollary~\ref{corollary:general_first_bound} are derived based on the expected value of the quantity $(\mathrm{EST}_{\mathrm{diag}(\boldsymbol{B})}^{m}-\mathrm{diag}(\boldsymbol{B}))_{i}^{2}$ and the direct application of Markov's inequality. While these bounds are general, they tend to be somewhat loose. Inspired by the work of \cite{hallman2023monte}, which reformulates a vector problem concerning a matrix diagonal into a matrix problem involving a diagonal matrix, we extend the probabilistic error bound from the infinity norm of the matrix, $\|\mathrm{diag}(\mathrm{EST}_{\mathrm{diag}(\boldsymbol{B})}^{m})-\mathscr{D}(\boldsymbol{B})\|_{\infty}$ (as given in Corollary 4.2 of \cite{hallman2023monte}), to a more general bound in the vector $L_{2}$ norm, $\|\mathrm{EST}_{\mathrm{diag}(\boldsymbol{B})}^{m}-\mathrm{diag}(\boldsymbol{B})\|_{2}$. Here, $\mathscr{D}(\boldsymbol{B})$ denotes the diagonal matrix formed from the diagonal elements of $\boldsymbol{B}$.

\begin{lemma}\label{lemma:general_second_bound}
	Let $\boldsymbol{B}\in \mathbb{R}^{n\times n}$ and let $\mathrm{EST}_{\mathrm{diag}(\boldsymbol{B})}^{m}$ be the estimation of $\mathrm{diag}(\boldsymbol{B})$ obtained by Eq. \eqref{eq:general_diagonal_estimator}, where query vectors are Gaussian random vectors. For any $0<\varepsilon,\delta <1$, if the number of query vectors $m$ satisfies
	\begin{equation}
		\label{eq:norm_wise_m_bound_asymptotic}
		m = O\left(\frac{64(\mathrm{e}\log(n))^{3}}{\varepsilon^{2}\delta}\left\|\boldsymbol{B}\right\|_{F}^{2}\right),
	\end{equation}
    then
	\begin{equation}
		\mathbb{P}\left\{\left\|\mathrm{EST}_{\mathrm{diag}(\boldsymbol{B})}^{m}-\mathrm{diag}(\boldsymbol{B})\right\|_{2}\leq \varepsilon \right\}\geq 1-\delta.
	\end{equation}
\end{lemma}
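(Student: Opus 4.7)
The plan is to reformulate the vector-$L_2$ estimation error as the Frobenius norm of a sum of i.i.d.\ mean-zero random diagonal matrices, then invoke the matrix concentration machinery underlying Corollary~4.2 of \cite{hallman2023monte}, adapted from the operator-norm guarantee given there to a Frobenius-norm guarantee.

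For each $k=1,\ldots,m$, set $\boldsymbol{M}_k := \mathscr{D}(\boldsymbol{\omega}^{(k)}\odot \boldsymbol{B}\boldsymbol{\omega}^{(k)})-\mathscr{D}(\boldsymbol{B})$; these are i.i.d.\ mean-zero diagonal random matrices. Because they are diagonal,
\[
\bigl\|\mathrm{EST}_{\mathrm{diag}(\boldsymbol{B})}^{m}-\mathrm{diag}(\boldsymbol{B})\bigr\|_{2} \;=\; \Bigl\|\frac{1}{m}\sum_{k=1}^{m} \boldsymbol{M}_k\Bigr\|_{F},
\]
so the claim reduces to a Frobenius-norm concentration bound for a sum of random matrices. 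The per-sample second-moment calculation $\mathbb{E}[(\boldsymbol{M}_k)_{ii}^{2}]=[\boldsymbol{B}]_{i,i}^{2}+\|[\boldsymbol{B}]_{i,:}\|_{2}^{2}$ has already been carried out inside the proof of Lemma~\ref{lemma:general_first_bound}; summing over $i$ yields $\mathbb{E}\|\boldsymbol{M}_k\|_{F}^{2}\le 2\|\boldsymbol{B}\|_{F}^{2}$, and by independence $\mathbb{E}\bigl\|\tfrac{1}{m}\sum_k \boldsymbol{M}_k\bigr\|_{F}^{2} \le 2\|\boldsymbol{B}\|_{F}^{2}/m$.

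To convert this variance estimate into a high-probability tail bound with the stated $(e\log n)^{3}/\delta$ scaling, I would apply the same matrix moment-generating-function / non-commutative Rosenthal inequality that powers \cite{hallman2023monte} in order to bound higher Schatten-$p$ moments of $\tfrac{1}{m}\sum_{k} \boldsymbol{M}_k$, then take $p\asymp \log n$ and pass back to a Frobenius tail bound via Markov applied to the $p$-th moment. The Hallman argument contributes two of the $\log n$ factors, through the moment bound for the degree-four Gaussian chaos $\boldsymbol{\omega}\odot \boldsymbol{B}\boldsymbol{\omega}$; the Schatten-$p$ to operator-norm step contributes the third, via the standard identity $n^{1/p}\le e$ when $p\asymp \log n$. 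Solving the resulting tail inequality for $m$ then yields Eq.~\eqref{eq:norm_wise_m_bound_asymptotic}.

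The principal obstacle I expect is keeping the $n$-dependence logarithmic. A naive lift from the operator-norm bound of Corollary~4.2 in \cite{hallman2023monte} via $\|D\|_{F}\le \sqrt{n}\,\|D\|_{\mathrm{op}}$ would inject an extra factor of $n$ into $m$, which would destroy the bound; the Schatten-$p$ route sketched above is the standard technique for avoiding that loss, but it requires carrying the constants through the non-commutative moment inequality carefully in order to land on the explicit prefactor $64(e\log n)^{3}$. Once that structural step is in place, the remaining work---independence across the $m$ samples, the variance computation inherited from Lemma~\ref{lemma:general_first_bound}, and the final Markov inversion---is routine.
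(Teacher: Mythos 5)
Your general strategy---reformulate the vector-$L_2$ error as a matrix-norm concentration problem for a sum of i.i.d.\ mean-zero random diagonal matrices and then invoke matrix-moment machinery---is conceptually aligned with the paper, but the specific route differs in the two places that actually decide the final constant. First, the paper does not work directly with the mean-zero increments $\boldsymbol{M}_k$; it performs an explicit symmetrization step, introducing i.i.d.\ Rademacher signs $r_k$, using convexity of $\|\cdot\|_2^2$ together with the symmetrization and contraction lemmas from \cite{vershynin2018high}, and crucially re-inserting $\mathrm{diag}(\boldsymbol{B})$ so that each summand becomes $\boldsymbol{\omega}^{(k)}\odot r_k\boldsymbol{B}\boldsymbol{\omega}^{(k)}$, i.e.\ a product $\boldsymbol{W}_k\boldsymbol{D}_k$ of two independent random diagonal matrices with $\mathbb{E}[\boldsymbol{W}_k\boldsymbol{D}_k]=\boldsymbol{O}$. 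Your sketch skips this step, but without it the summand is a centered chaos rather than a product, and the subsequent expectation bound the paper uses is not directly applicable. Second, and more importantly, the paper does \emph{not} take the Schatten-$p$/$p\asymp\log n$ route you propose to avoid the $\sqrt{n}$ loss. It embraces that loss: it passes from the vector $2$-norm squared to $n$ times the operator-norm squared of a diagonal matrix (i.e.\ precisely the $\|\boldsymbol{v}\|_2^2\le n\|\mathscr{D}(\boldsymbol{v})\|_{\mathrm{op}}^2$ step you flagged as fatal), applies the expectation operator-norm bound of Thm.\ 3.2 in \cite{chen2012masked} to $\sum_k\boldsymbol{W}_k\boldsymbol{D}_k$, and controls the Gaussian maxima via $(\mathbb{E}[\max_k\|\boldsymbol{\omega}^{(k)}\|_\infty^4])^{1/2}\le e\log(nm)$. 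The factor of $n$ is then absorbed in the very last line by the conversion between $n\|\boldsymbol{B}\|_\infty^2$ and $\|\boldsymbol{B}\|_F^2$, which is where the $\|\boldsymbol{B}\|_F^2$ in the statement originates. So your diagnosis that a naive operator-norm lift ``would destroy the bound'' is a sound instinct and points at a genuinely delicate part of the argument, but it is not the route the paper actually takes.

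Beyond the route mismatch, the proposal is currently a plan rather than a proof: the specific non-commutative moment inequality to be applied is not identified, the moment calculations that would produce the $(e\log n)^3$ prefactor are not carried out, and the $n^{1/p}\le e$ bookkeeping is asserted but not tied to an actual tail bound. If you want to complete the lemma along the paper's lines, the concrete missing ingredients are: (i) the symmetrization/contraction step that turns the sum into $\sum_k\boldsymbol{W}_k\boldsymbol{D}_k$; (ii) the masked-covariance expectation bound (Chen--Gittens--Tropp, Thm.\ 3.2) in place of a generic Rosenthal inequality; and (iii) the Gaussian-maximum estimate $(\mathbb{E}[\max_k\|\boldsymbol{\omega}^{(k)}\|_\infty^4])^{1/2}\le e\log(nm)$. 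Alternatively, if you want to pursue the Schatten-$p$ route you sketched, you would need to pick a specific $p$-th moment bound for $\sum_k\boldsymbol{W}_k\boldsymbol{D}_k$ (or for $\sum_k\boldsymbol{M}_k$ directly) and verify that the dimensional factors really do collapse to $(\log n)^3$; that is not something that follows immediately from citing Corollary~4.2 of \cite{hallman2023monte}, which gives a max-norm (not Frobenius) guarantee.
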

\begin{proof}
	Based on Eq. \eqref{eq:single_element_error}, we have
	\begin{equation}
		\left\|\mathrm{EST}_{\mathrm{diag}(\boldsymbol{B})}^{m}-\mathrm{diag}(\boldsymbol{B})\right\|_{2}^{2} = \frac{1}{m^{2}}\left\|\sum_{k=1}^{m}\boldsymbol{e}^{(k)}\right\|_{2}^{2}.
	\end{equation}
Let $r_{1},r_{2},\ldots,r_{m}$ denote independent Rademacher random vectors, as $f(\boldsymbol{x})=\|\boldsymbol{x}\|_{2}^{2}$ is convex and $\mathbb{E}\left[\sum_{k=1}^{m}r_{k}\mathrm{diag}(\boldsymbol{B})\right]=\mathbf{0}$, combining Lem. 6.1.2 and Lem. 6.4.2 in \cite{vershynin2018high}, we have
\begin{align*}
	\mathbb{E}\left[\left\|\sum_{k=1}^{m}\boldsymbol{e}^{(k)}\right\|_{2}^{2}\right] & \leq 2\mathbb{E}\left[\left\|\sum_{k=1}^{m}r_{k}\boldsymbol{e}^{(k)}\right\|_{2}^{2}\right]\\
	& \leq 2\mathbb{E}\left[\left\|\sum_{k=1}^{m}r_{k}\boldsymbol{e}^{(k)}+\sum_{k=1}^{m}r_{k}\mathrm{diag}(\boldsymbol{B})\right\|_{2}^{2}\right] \\
	& = 2\mathbb{E}\left[\left\|\sum_{k=1}^{m}\boldsymbol{\omega}^{(k)}\odot r_{k}\boldsymbol{B}\boldsymbol{\omega}^{(k)}\right\|_{2}^{2}\right].
\end{align*}
Let $\boldsymbol{W}_{k}:=\mathrm{diag}(\boldsymbol{\omega}^{(k)})$, $\boldsymbol{D}_{k} := \mathrm{diag}(r_{k}\boldsymbol{B}\boldsymbol{\omega}^{(k)})$, then
\begin{align*}
	\left\|\sum_{k=1}^{m}\boldsymbol{\omega}^{(k)}\odot r_{k}\boldsymbol{B}\boldsymbol{\omega}^{(k)}\right\|_{2}^{2} & = \sum_{i=1}^{n}\left(\sum_{k=1}^{m}\left(\boldsymbol{\omega}^{(k)}\right)_{i}\left(r_{k}\boldsymbol{B}\boldsymbol{\omega}^{(k)}\right)_{i}\right)^{2} \\
	& \leq n\max_{1\leq i\leq n}\left(\sum_{k=1}^{m}\left(\boldsymbol{\omega}^{(k)}\right)_{i}\left(r_{k}\boldsymbol{B}\boldsymbol{\omega}^{(k)}\right)_{i}\right)^{2}\\
	& = n \left\|\sum_{k=1}^{m}\boldsymbol{W}_{k}\boldsymbol{D}_{k}\right\|_{2}^{2}.
\end{align*}
Thus, $\mathbb{E}[\|\sum_{k=1}^{m}\boldsymbol{e}^{(k)}\|_{2}^{2}]\leq 2 n \mathbb{E}[\|\sum_{k=1}^{m}\boldsymbol{W}_{k}\boldsymbol{D}_{k}\|_{2}^{2}]$. Since $r_{k}$ and $\boldsymbol{\omega}^{(k)}$ are independent, $\mathbb{E}\left[\boldsymbol{W}_{k}\boldsymbol{D}_{k}\right]=\mathbf{O}$. Based on Thm. 3.2 in \cite{chen2012masked}, we have
\begin{equation}
	\label{eq:first_second_term}
	\begin{aligned}
		\left(\mathbb{E}\left[\left\|\sum_{k=1}^{m}\boldsymbol{W}_{k}\boldsymbol{D}_{k}\right\|_{2}^{2}\right]\right)^{1/2}
		\leq\  & \sqrt{2\mathrm{e}\log(n)}\underbrace{\left\|\left(\sum_{k=1}^{m}\mathbb{E}\left[(\boldsymbol{W}_{k}\boldsymbol{D}_{k})^{2}\right]\right)^{1/2}\right\|_{2}}_{\text{first term}}+\\
  &\quad\ 4\mathrm{e}\log(n)\underbrace{\left(\mathbb{E}\left[\max_{1\leq k\leq m}\left\|\boldsymbol{W}_{k}\boldsymbol{D}_{k}\right\|_{2}^{2}\right]\right)^{1/2}.}_{\text{second term}}
	\end{aligned}
\end{equation}
For Gaussian random vector $\boldsymbol{\omega}^{(k)}\sim \mathcal{N}(\mathbf{0},\boldsymbol{I})$, the following relationship holds \cite{chen2012masked},
\begin{equation}
	\label{eq:the_fourthmoment_of_gaussian_vector}
	\left(\mathbb{E}\left[\max_{1\leq k\leq m}\left\|\boldsymbol{\omega}^{(k)}\right\|_{\infty}^{4}\right]\right)^{1/2}\leq \mathrm{e}\log(nm).
\end{equation}
\begin{itemize}
	\item The first term: The $i$-th diagonal element of matrix $\left(\boldsymbol{W}_{k}\boldsymbol{D}_{k}\right)^{2}$ is 
		\begin{align*}
			\left[\boldsymbol{W}_{k}^{2}\boldsymbol{D}_{k}^{2}\right]_{i,i} &= \left(\boldsymbol{\omega}^{(k)}\right)_{i}^{2}\left(\sum_{j=1}^{n}[\boldsymbol{B}]_{i,j}(\boldsymbol{\omega}^{(k)})_{j}\right)^{2}\\
			&\leq \left\|\boldsymbol{\omega}^{(k)}\right\|_{\infty}^{4}\left\|[\boldsymbol{B}]_{i,:}\right\|_{1}^{2}.
		\end{align*}
	As $\boldsymbol{W}_{k}\boldsymbol{D}_{k}$ is diagonal matrix,
	\begin{equation*}
		\boldsymbol{W}_{k}^{2}\boldsymbol{D}_{k}^{2}\preceq \left\|\boldsymbol{\omega}^{(k)}\right\|_{\infty}^{4}\left[\begin{array}{cccc}
			\left\|[\boldsymbol{B}]_{1,:}\right\|_{1}^{2} & & & \\
			& \left\|[\boldsymbol{B}]_{2,:}\right\|_{1}^{2} & & \\
			& & \ddots & \\
			& &  & \left\|[\boldsymbol{B}]_{n,:}\right\|_{1}^{2}
		\end{array}\right].
	\end{equation*}
Combining this equality with Eq. \eqref{eq:the_fourthmoment_of_gaussian_vector}, we have
\begin{equation}
	\begin{aligned}
		\label{eq:the_first_term}
  \left(\sum_{k=1}^{m}\mathbb{E}\left[\boldsymbol{W}_{k}^{2}\boldsymbol{D}_{k}^{2}\right]\right)^{1/2} &\preceq \left(m\max_{1\leq i\leq n}\left\|[\boldsymbol{B}]_{i,:}\right\|_{1}^{2}\boldsymbol{I}\mathbb{E}\left[\max_{1\leq k\leq m}\left\|\boldsymbol{\omega}^{(k)}\right\|_{\infty}^{4}\right] \right)^{1/2}\\
		& = \sqrt{m}\mathrm{e}\log(nm)\left\|\boldsymbol{B}\right\|_{\infty}\boldsymbol{I}.
	\end{aligned}
\end{equation} 
Thus, the first term in Eq. \eqref{eq:first_second_term} is not larger than $\sqrt{m}\mathrm{e}\log(nm)\|\boldsymbol{B}\|_{\infty}$.
	\item The second term: As $[\boldsymbol{W}_{k}\boldsymbol{D}_{k}]_{i,i}\leq \|[\boldsymbol{B}]_{i,:}\|_{1}\|\boldsymbol{\omega}^{(k)}\|_{\infty}^{2}$, we have $\|\boldsymbol{W}_{k}\boldsymbol{D}_{k}\|_{2}^{2}\leq \|\boldsymbol{B}\|_{\infty}^{2}\|\boldsymbol{\omega}^{(k)}\|_{\infty}^{4}$, then
	\begin{equation}
        \label{eq:the_second_term}
		\left(\mathbb{E}\left[\max_{1\leq k\leq m}\left\|\boldsymbol{W}_{k}\boldsymbol{D}_{k}\right\|_{2}^{2}\right]\right)^{1/2} \leq \mathrm{e}\log(nm)\|\boldsymbol{B}\|_{\infty}.
	\end{equation}
\end{itemize}
Based on Eq. \eqref{eq:the_first_term} and Eq. \eqref{eq:the_second_term},
\begin{equation}
	\left(\mathbb{E}\left[\left\|\sum_{k=1}^{m}\boldsymbol{W}_{k}\boldsymbol{D}_{k}\right\|_{2}^{2}\right]\right)^{1/2} \leq (\sqrt{2\mathrm{e}m\log(n)}+4\mathrm{e}\log(n))\mathrm{e}\log(nm)\|\boldsymbol{B}\|_{\infty}.
\end{equation}
Furthermore,
\begin{equation}
	\begin{aligned}
		\mathbb{E}\left[\left\|\mathrm{EST}^{m}_{\mathrm{diag}(\boldsymbol{B})}-\mathrm{diag}(\boldsymbol{B})\right\|_{2}^{2}\right] & = \frac{1}{m^{2}}\mathbb{E}\left[\left\|\sum_{k=1}^{m}\boldsymbol{e}^{(k)}\right\|_{2}^{2}\right]\\
		&\leq \frac{2n}{m^{2}}\mathbb{E}\left[\left\|\sum_{k=1}^{m}\boldsymbol{W}_{k}\boldsymbol{D}_{k}\right\|_{2}^{2}\right]\\
		& \leq \frac{n}{2}\left(\sqrt{\frac{8\mathrm{e}\log(n)}{m}}+\frac{8\mathrm{e}\log(n)}{m}\right)^{2}\left(\mathrm{e}\log(nm)\right)^{2}\left\|\boldsymbol{B}\right\|_{\infty}^{2}.
	\end{aligned}
\end{equation}
When $8\mathrm{e}\log(n)\leq m \leq n$, the above result can be transformed into
\begin{equation}
	\mathbb{E}\left[\left\|\mathrm{EST}^{m}_{\mathrm{diag}(\boldsymbol{B})}-\mathrm{diag}(\boldsymbol{B})\right\|_{2}^{2}\right]\leq \frac{64n(\mathrm{e}\log(n))^{3}}{m}\left\|\boldsymbol{B}\right\|_{\infty}^{2}.
\end{equation} 
As $n\|\boldsymbol{B}\|_{\infty}^{2}\geq \|\boldsymbol{B}\|_{F}^{2}$, combining Markov inequality, we have that if the number of random query vectors $m$ satisfies
\begin{equation*}
	m = O\left(\frac{64(\mathrm{e}\log(n))^{3}}{\varepsilon^{2}\delta}\left\|\boldsymbol{B}\right\|_{F}^{2}\right),
\end{equation*}
then 
\begin{equation*}
	\mathbb{P}\left\{\left\|\mathrm{EST}_{\mathrm{diag}(\boldsymbol{B})}^{m}-\mathrm{diag}(\boldsymbol{B})\right\|_{2}\leq \varepsilon\right\}\geq 1-\delta.
\end{equation*}
\end{proof}

The lower bound on the number of query vectors $m$ in Eq. \eqref{eq:norm_wise_m_bound_asymptotic} further underscores the advantage of using a projection-based stochastic estimator in reducing the number of query vectors needed to achieve a specified accuracy. Specifically, performing stochastic diagonal estimation on the projected matrix $\boldsymbol{A}(\boldsymbol{I}-\boldsymbol{Q}\boldsymbol{Q}^{\mathrm{T}})$ requires fewer query vectors to reach the same error tolerance than applying stochastic diagonal estimation directly to $\boldsymbol{A}$.

\noindent{\textbf{Concentrated inequalities on stochastic diagonal estimator}}

The generalized stochastic diagonal estimator in Eq. \eqref{eq:general_diagonal_estimator}, which represents a form of the stochastic diagonal estimator (Eq. \eqref{eq:diagonal_B}) in expectation, provides a straightforward structure that facilitates theoretical analysis. As discussed earlier, the projection-based diagonal estimator reduces the number of required query vectors. However, as emphasized in \Cref{subsection:adaptive_diagonal_experiment}, an estimator that can adaptively adjust the projection subspace size is essential for optimizing efficiency. Therefore, it becomes crucial to analyze the standard form of the stochastic diagonal estimator and establish a precise, non-asymptotic lower bound for the number of query vectors.

\begin{theorem}
	\label{theorem:our_diagonal_bound}
	Let $\boldsymbol{B}\in \mathbb{R}^{n\times n}$ and let $\mathrm{EST}_{\mathrm{diag}(\boldsymbol{B})}^{m}$ be the estimation of $\mathrm{diag}(\boldsymbol{B})$ obtained by \eqref{eq:diagonal_B}, where query vectors are Gaussian random vectors. For any $0<\varepsilon,\delta<1$, if the number of query vectors $m$ satisfies
	\begin{equation}
		\label{eq:m_bound_our}
		m \geq 1+2\dfrac{\log\left(\sqrt{\dfrac{2}{\uppi}}\dfrac{n\left\|\boldsymbol{B}_{\text{off}}\right\|_{F}}{\varepsilon\delta}\right)}{\log\left(1+\dfrac{\varepsilon^{2}}{\left\|\boldsymbol{B}_{\text{off}}\right\|_{F}^{2}}\right)},
	\end{equation}
where $\boldsymbol{B}_{\text{off}}$ is the matrix obtained by setting all the diagonal of $\boldsymbol{B}$ to zero, then
\begin{equation}
	\mathbb{P}\left\{\left\|\mathrm{EST}_{\mathrm{diag}(\boldsymbol{B})}^{m}-\mathrm{diag}(\boldsymbol{B})\right\|_{2}\leq \varepsilon\right\}\geq 1-\delta.
\end{equation}
\end{theorem}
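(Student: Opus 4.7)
The plan is to reduce the tail of each coordinate of the error to a Student's $t$ statistic, aggregate via a union bound over the $n$ entries, and then invert the resulting probability bound to extract the lower bound on $m$. For a fixed index $j$, I expand the $j$-th entry of the error as
\begin{equation*}
\left(\mathrm{EST}_{\mathrm{diag}(\boldsymbol{B})}^{m}-\mathrm{diag}(\boldsymbol{B})\right)_{j} = \frac{\sum_{k\neq j}[\boldsymbol{B}]_{j,k}\,S_{jk}}{S_{jj}},
\end{equation*}
with $S_{jj}:=\sum_{i=1}^{m}(\boldsymbol{\omega}^{(i)})_{j}^{2}$ and $S_{jk}:=\sum_{i=1}^{m}(\boldsymbol{\omega}^{(i)})_{j}(\boldsymbol{\omega}^{(i)})_{k}$. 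Conditioning on the column $\{(\boldsymbol{\omega}^{(i)})_{j}\}_{i=1}^{m}$, the remaining Gaussian entries stay independent of it, so $\{S_{jk}\}_{k\neq j}$ are conditionally independent $\mathcal{N}(0,S_{jj})$; hence $\sum_{k\neq j}[\boldsymbol{B}]_{j,k}S_{jk}$ given $S_{jj}$ is $\mathcal{N}(0,\|[\boldsymbol{B}_{\text{off}}]_{j,:}\|_{2}^{2}\,S_{jj})$, and the marginal law of the $j$-th error coordinate is that of $\|[\boldsymbol{B}_{\text{off}}]_{j,:}\|_{2}\,T_{m,j}/\sqrt{m}$, where $T_{m,j}$ follows Student's $t$ distribution with $m$ degrees of freedom.

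Next, I use the norm-matching allocation $a_{j}:=\varepsilon^{2}\|[\boldsymbol{B}_{\text{off}}]_{j,:}\|_{2}^{2}/\|\boldsymbol{B}_{\text{off}}\|_{F}^{2}$, which satisfies $\sum_{j}a_{j}=\varepsilon^{2}$; whenever the squared norm error exceeds $\varepsilon^{2}$ some coordinate must exceed its allocation, so the union bound yields
\begin{equation*}
\mathbb{P}\left\{\left\|\mathrm{EST}_{\mathrm{diag}(\boldsymbol{B})}^{m}-\mathrm{diag}(\boldsymbol{B})\right\|_{2}\geq \varepsilon\right\}\leq n\cdot \mathbb{P}\left\{|T_{m}|\geq \sqrt{m}\,\varepsilon/\|\boldsymbol{B}_{\text{off}}\|_{F}\right\}.
\end{equation*}

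The heart of the argument is a sharp tail bound on $T_{m}$. I will apply Gautschi's inequality $\Gamma((m+1)/2)/\Gamma(m/2)\leq \sqrt{m/2}$ to dominate the Student density by $(2\uppi)^{-1/2}(1+t^{2}/m)^{-(m+1)/2}$. Using the antiderivative identity
$\frac{d}{dt}\!\left[-\frac{m}{m-1}(1+t^{2}/m)^{-(m-1)/2}\right]= t\,(1+t^{2}/m)^{-(m+1)/2}$, and bounding $1/t\leq 1/y$ inside the tail integral, I obtain
\begin{equation*}
\mathbb{P}\left\{|T_{m}|\geq y\right\}\leq \sqrt{\frac{2}{\uppi}}\cdot\frac{m}{(m-1)\,y}\left(1+\frac{y^{2}}{m}\right)^{-(m-1)/2}.
\end{equation*}
Substituting $y=\sqrt{m}\,\varepsilon/\|\boldsymbol{B}_{\text{off}}\|_{F}$, using the elementary inequality $\sqrt{m}/(m-1)\leq 1$ (valid for $m\geq 3$, which the hypothesis always enforces), and combining with the union bound gives a failure probability at most $\sqrt{2/\uppi}\cdot (n\|\boldsymbol{B}_{\text{off}}\|_{F}/\varepsilon)(1+\varepsilon^{2}/\|\boldsymbol{B}_{\text{off}}\|_{F}^{2})^{-(m-1)/2}$. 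Enforcing this quantity to be at most $\delta$ and taking logarithms produces exactly the lower bound \eqref{eq:m_bound_our}.

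The main obstacle is the $t$-tail estimate in the third step: producing the geometric decay factor $(1+y^{2}/m)^{-(m-1)/2}$ with only an $O(1/y)$ prefactor, since this is what allows the denominator $\log(1+\varepsilon^{2}/\|\boldsymbol{B}_{\text{off}}\|_{F}^{2})$ to appear in the final bound. A Markov--moment route through $\mathbb{E}[T_{m}^{2k}]$ would introduce Gamma-function ratios that contaminate the clean logarithmic inversion, so it seems necessary to integrate the density directly via the antiderivative identity above rather than argue through high moments.
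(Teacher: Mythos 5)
Your proposal reaches the stated bound and follows essentially the same template as the paper: reduce each coordinate of the error to a Student's $t$ statistic, apply a $t$-tail bound, take a union bound over the $n$ coordinates, and invert. The only substantive difference is that you derive the per-coordinate tail bound from scratch (conditioning on the $j$-th column of the query matrix, reducing to a $t_{m}$ variable, and integrating the density via Gautschi's inequality and the antiderivative trick), whereas the paper cites this tail bound as Theorem~5.8 of Hallman--Troester. Your self-contained derivation yields a slightly different prefactor, $m/(m-1)$, which you absorb using $\sqrt{m}/(m-1)\le 1$ for $m\ge 3$; the paper's cited inequality carries the sharper $1/\sqrt{m}$ prefactor and simply uses $1/\sqrt{m}\le 1$ for $m\ge 1$. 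One small caveat: your claim that the hypothesis \emph{always} enforces $m\ge 3$ is not quite automatic --- when $n\|\boldsymbol{B}_{\text{off}}\|_{F}$ is tiny relative to $\varepsilon\delta$ the right-hand side of \eqref{eq:m_bound_our} can dip below $3$ --- so it is cleaner to either use $1/\sqrt{m}\le 1$ (as the paper does) or to separately verify that for $m\le 2$ the failure probability already trivially exceeds the right-hand side. Otherwise the reduction to the $t$ distribution, the per-row allocation $a_{j}=\varepsilon^{2}\|[\boldsymbol{B}_{\text{off}}]_{j,:}\|_{2}^{2}/\|\boldsymbol{B}_{\text{off}}\|_{F}^{2}$, and the log inversion all match the paper's argument.
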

\begin{proof}
	Based on Thm. 5.8 in \cite{hallman2023monte}, we have that for any $1\leq i \leq n$,
	\begin{equation}
		\begin{aligned}
		&\mathbb{P}\left\{\left|\left(\mathrm{EST}_{\mathrm{diag}(\boldsymbol{B})}^{m}\right)_{i}-[\boldsymbol{B}]_{i,i}\right|\geq t\right\}\\
		\leq & \frac{1}{t}\sqrt{\frac{2\left(\left\|[\boldsymbol{B}]_{i,:}\right\|_{2}^{2}-[\boldsymbol{B}]_{i,i}^{2}\right)}{\uppi m}}\left(1+\frac{t^{2}}{\left\|[\boldsymbol{B}]_{i,:}\right\|_{2}^{2}-[\boldsymbol{B}]_{i,i}^{2}}\right)^{-\frac{m-1}{2}}.
		\end{aligned}
	\end{equation}
Let $t^{2} = \varepsilon^{2}(\|[\boldsymbol{B}]_{i,:}\|_{2}^{2}-[\boldsymbol{B}]_{i,i}^{2})$, and substitute it into the above inequality,
\begin{align*}
	& \mathbb{P}\left\{\left|\left(\mathrm{EST}_{\mathrm{diag}(\boldsymbol{B})}^{m}\right)_{i}-[\boldsymbol{B}]_{i,i}\right|^{2}\geq\varepsilon^{2}\left(\left\|[\boldsymbol{B}]_{i,:}\right\|_{2}^{2}-[\boldsymbol{B}]_{i,i}^{2}\right)\right\} \\
	\leq & \frac{\sqrt{2}}{\varepsilon\sqrt{\uppi m}}\left(1+\varepsilon^{2}\right)^{-\frac{m-1}{2}} 
	\leq  \sqrt{\frac{2}{\uppi}}\frac{\left(1+\varepsilon^{2}\right)^{-\frac{m-1}{2}}}{\varepsilon}. \quad (\text{as}\,\, m\geq 1)
\end{align*}
Applying the union bound, we have
\begin{align*}
	& \mathbb{P}\left\{\left\|\mathrm{EST}_{\mathrm{diag}(\boldsymbol{B})}^{m}-\mathrm{diag}(\boldsymbol{B})\right\|_{2}^{2}\geq \varepsilon^{2}\left(\left\|\boldsymbol{B}\right\|_{F}^{2}-\left\|\mathrm{diag}(\boldsymbol{B})\right\|_{2}^{2}\right)\right\} \\
	\leq & \sum_{i=1}^{n}\mathbb{P}\left\{\left|\left(\mathrm{EST}_{\mathrm{diag}(\boldsymbol{B})}^{m}\right)_{i}-[\boldsymbol{B}]_{i,i}\right|\geq\varepsilon^{2}\left(\left\|[\boldsymbol{B}]_{i,:}\right\|_{2}^{2}-[\boldsymbol{B}]_{i,i}^{2}\right)\right\} \\
	\leq & \sqrt{\frac{2}{\uppi}}\frac{n\left(1+\varepsilon^{2}\right)^{-\frac{m-1}{2}}}{\varepsilon}.
\end{align*}
Let $\delta\geq \sqrt{\frac{2}{\uppi}}\frac{n}{\varepsilon}\left(1+\varepsilon^{2}\right)^{-\frac{m-1}{2}}$, then
\begin{equation*}
	m\geq 1+2\dfrac{\log\left(\sqrt{\dfrac{2}{\uppi}}\dfrac{n}{\varepsilon\delta}\right)}{\log(1+\varepsilon^{2})},
\end{equation*}
which guarantees the following inequality,
\begin{equation}
	\mathbb{P}\left\{\left\|\mathrm{EST}_{\mathrm{diag}(\boldsymbol{B})}^{m}-\mathrm{diag}(\boldsymbol{B})\right\|_{2}\geq \varepsilon\left\|\boldsymbol{B}_{\text{off}}\right\|_{F}\right\}.
\end{equation}
Finally, one may let $\varepsilon = \varepsilon\|\boldsymbol{B}_{\text{off}}\|_{F}$, and substitute it into the above probability inequality to obtain the conclusion.
\end{proof}

Based on this theoretical result, we will next design a stochastic diagonal estimator with adaptive parameter optimization.

\section{Adaptive diagonal estimator}
\label{sec:section3}
Let $\boldsymbol{A}\in \mathbb{R}^{n\times n}$ be a general matrix. Recall the estimator for the diagonal of the matrix based on the projection method described in Eq. \eqref{eq:reducation_variance_diag},
\begin{equation*}
	\mathrm{diag}(\boldsymbol{A}) = \underbrace{\mathrm{diag}(\boldsymbol{A}\boldsymbol{Q}\boldsymbol{Q}^{\mathrm{T}})}_{\boldsymbol{d}_{\mathrm{defl}}}+\underbrace{\mathrm{diag}(\boldsymbol{A}(\boldsymbol{I}-\boldsymbol{Q}\boldsymbol{Q}^{\mathrm{T}}))}_{\boldsymbol{d}_{\mathrm{rem}}},
\end{equation*}
where $\boldsymbol{Q}\in \mathbb{R}^{n\times k}$ and $\boldsymbol{Q}^{\mathrm{T}}\boldsymbol{Q}=\boldsymbol{I}$. We denote the value or estimation of $\mathrm{diag}(\boldsymbol{A}\boldsymbol{Q}\boldsymbol{Q}^{\mathrm{T}})$ and $\mathrm{diag}(\boldsymbol{A}(\boldsymbol{I}-\boldsymbol{Q}\boldsymbol{Q}^{\mathrm{T}}))$ by $\boldsymbol{d}_{\mathrm{defl}}$ and $\boldsymbol{d}_{\mathrm{rem}}$, respectively.

\begin{algorithm}
	\caption{Prototype algorithm for matrix $\boldsymbol{A}$ (based on RSVD)}
	\label{algorithm:Prototype_algorithm_based_rsvd}
	\begin{algorithmic}[1]
		\STATE{\textbf{Input} Matrix $\boldsymbol{A}\in \mathbb{R}^{n\times n}$, error tolerance $\varepsilon>0$, failure probability $\delta\in (0,1)$}
		\STATE{\textbf{Initialize} Gaussian random matrix $\boldsymbol{\Omega}\in \mathbb{R}^{n\times k}$, $\boldsymbol{mole}=\mathrm{zeros}(n,1)$, $\boldsymbol{deno}=\mathrm{zeros}(n,1)$}
		\STATE{$\boldsymbol{Y}=\boldsymbol{A}\boldsymbol{\Omega}$}
		\STATE{Obtain orthonormal basis $\boldsymbol{Q}$ for $\mathrm{range}(\boldsymbol{Y})$}
		\STATE{$\boldsymbol{d}_{\mathrm{defl}}=\mathrm{diag}((\boldsymbol{A}\boldsymbol{Q})\boldsymbol{Q}^{\mathrm{T}})$}
		\STATE{Compute query vectors number $m$ based on Eq. \eqref{eq:m_bound_our}}
		\FOR{$i=1$ to $m$}
		\STATE{Sample Gaussian random vector $\boldsymbol{\omega}_{i}\in \mathbb{R}^{n}$}
		\STATE{$\boldsymbol{y}_{i}=(\boldsymbol{I}-\boldsymbol{Q}\boldsymbol{Q}^{\mathrm{T}})\boldsymbol{\omega}_{i}$}
		\STATE{Update $\boldsymbol{mole}=\boldsymbol{mole}+\boldsymbol{\omega}_{i}\odot(\boldsymbol{A}\boldsymbol{y}_{i})$, and update $\boldsymbol{deno}=\boldsymbol{deno}+\boldsymbol{\omega}_{i}
			\odot\boldsymbol{\omega}_{i}$}
		\ENDFOR
		\STATE{$\boldsymbol{d}_{\mathrm{rem}}=\boldsymbol{mole}\oslash\boldsymbol{deno}$}
		\RETURN{$\boldsymbol{d}_{\mathrm{defl}}+\boldsymbol{d}_{\mathrm{rem}}$}
	\end{algorithmic}
\end{algorithm}

Algorithm \ref{algorithm:Prototype_algorithm_based_rsvd} presents a prototype for a stochastic diagonal estimation algorithm utilizing the projection-based approach. The algorithm begins by multiplying a Gaussian random matrix $\boldsymbol{\Omega}\in \mathbb{R}^{n\times k}(k<n)$ to the right of matrix $\boldsymbol{A}\in \mathbb{R}^{n\times n}$, and subsequently computes the orthogonal basis of $\mathrm{range}(\boldsymbol{A}\boldsymbol{\Omega})$, resulting in the column orthogonal matrix $\boldsymbol{Q}\in \mathbb{R}^{n\times k}$. This is a common approach in randomized SVD (RSVD). The $\mathrm{diag}((\boldsymbol{A}\boldsymbol{Q})\boldsymbol{Q}^{\mathrm{T}})$ is then computed directly through matrix-vector multiplication, as outlined in lines $3$ to $5$ of Algorithm \ref{algorithm:Prototype_algorithm_based_rsvd}. However, since $\boldsymbol{A}(\boldsymbol{I}-\boldsymbol{Q}\boldsymbol{Q}^{\mathrm{T}})$ is still high dimension, it cannot be computed directly. Instead, it is estimated by the stochastic diagonal estimator defined in Eq. \eqref{eq:diagonal_B}, as detailed in lines $7$ to $12$ of Algorithm \ref{algorithm:Prototype_algorithm_based_rsvd}.

To improve the efficiency of estimating $\mathrm{diag}(\boldsymbol{A})$, we focus on minimizing the number of matrix-vector multiplications with respect to matrix $\boldsymbol{A}$ in Algorithm~\ref{algorithm:Prototype_algorithm_based_rsvd}. The total number of matrix-vector multiplications needed for Algorithm~\ref{algorithm:Prototype_algorithm_based_rsvd}~ can be derived by referring to Theorem~\ref{theorem:our_diagonal_bound}. Given the probabilistic error bound parameters $(\varepsilon,\delta)$, let $\mathrm{EST}_{\mathrm{diag}(\boldsymbol{A})}$ denote the final diagonal estimation from the projection-based method. To satisfy the following error bound
\begin{equation*}
\mathbb{P}\left\{\left\|\mathrm{EST}_{\mathrm{diag}(\boldsymbol{A})}-\mathrm{diag}(\boldsymbol{A})\right\|_{2}\leq \varepsilon \right\}\geq 1-\delta,
\end{equation*}
the total number of matrix-vector multiplications with $\boldsymbol{A}$ (denoted by $\# \mathrm{matvecs}$) is given by
\begin{equation}
\label{eq:matvecs_function}
\#\mathrm{matvecs}(k) = 2k + m,
\end{equation}
where $m$ is defined in Eq. \eqref{eq:m_bound_our}. Note that since the error mainly comes from estimating $\mathrm{diag}(\boldsymbol{A}(\boldsymbol{I}-\boldsymbol{Q}\boldsymbol{Q}^{\mathrm{T}}))$ and $\mathrm{diag}(\boldsymbol{A}\boldsymbol{Q}\boldsymbol{Q}^{\mathrm{T}})$ can be exactly computed, the matrix $\boldsymbol{B}$ in Eq. \eqref{eq:m_bound_our} refers to $\boldsymbol{A}(\boldsymbol{I}-\boldsymbol{Q}\boldsymbol{Q}^{\mathrm{T}})$. 

Since $m$ is an increasing function of $\|\boldsymbol{B}_{\text{off}}\|_{F}$ and is affected by the choice of $k$, we simplify $m$ as a function of $\|\boldsymbol{B}_{\text{off}}\|_{F}$ (i.e., $g(\|\boldsymbol{B}_{\text{off}}\|_{F})$), then
\begin{equation}
	\label{eq:matvecs_function_2}
	\#\mathrm{matvecs}(k) = 2k + g\left(\left\|\boldsymbol{B}_{\text{off}}\right\|_{F}\right),
\end{equation} 
where
\begin{equation*}
    g\left(\left\|\boldsymbol{B}_{\text{off}}\right\|_{F}\right) = 1+2\dfrac{\log\left(\sqrt{\dfrac{2}{\uppi}}\dfrac{n\left\|\boldsymbol{B}_{\text{off}}\right\|_{F}}{\varepsilon\delta}\right)}{\log\left(1+\dfrac{\varepsilon^{2}}{\left\|\boldsymbol{B}_{\text{off}}\right\|_{F}^{2}}\right)}.
\end{equation*}

Ideally, the column orthogonal matrix $\boldsymbol{Q}$ consists of the first $k$ principal eigenvectors of $\boldsymbol{A}$. Figure~\ref{fig:min_matvecs} illustrates the variation of $\#\mathrm{matvecs}$ with respect to the size of the projection subspace $k$ under different eigenvalue decay patterns. It is shown that the minimum value of $\#\mathrm{matvecs}$ exists across all eigenvalue decay patterns, though the optimal value of $k$ varies depending on the matrix type. This leads to the first challenge in adaptive stochastic diagonal estimation: determining the value of $k$ that minimizes $\#\mathrm{matvecs}$, i.e., identifying the optimal number of columns of $\boldsymbol{Q}$ to minimize the total number of matrix-vector multiplications in the algorithm.

\begin{figure}[htbp]
    \subfigure[flat]{
    \begin{minipage}[t]{0.45\linewidth}
    \includegraphics[width=1\textwidth]{./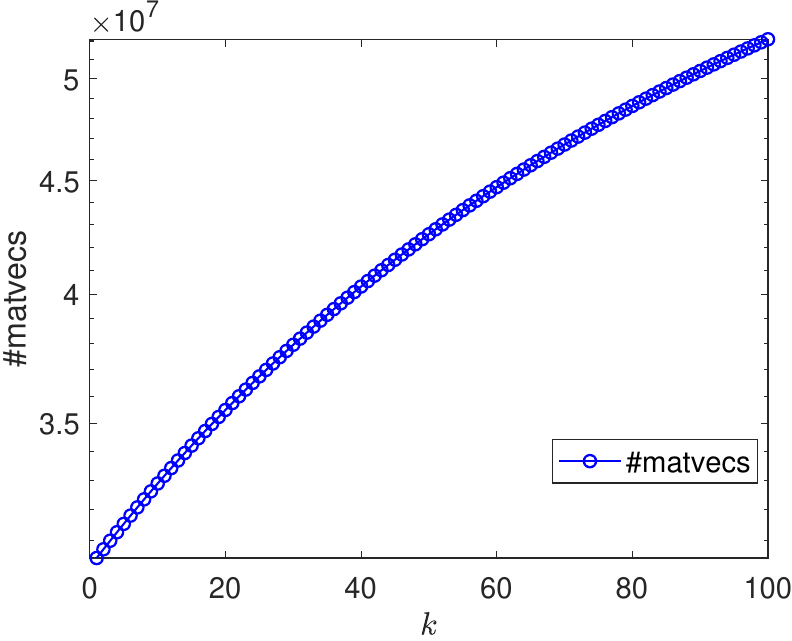}\label{fig:subfig-flag-min}
    \end{minipage}%
    }%
    \hspace{1em}
    \subfigure[poly]{
    \begin{minipage}[t]{0.45\linewidth}
    \includegraphics[width=1\textwidth]{./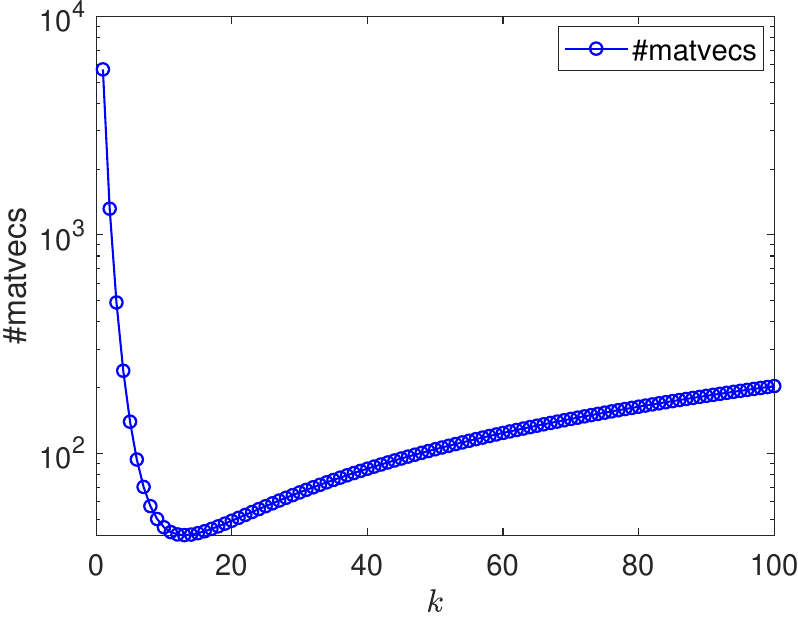}\label{fig:subfig-ploy-min}
    \end{minipage}%
    }%
    \vspace{1em}
    \subfigure[exp]{
    \begin{minipage}[t]{0.45\linewidth}
    \includegraphics[width=1\textwidth]{./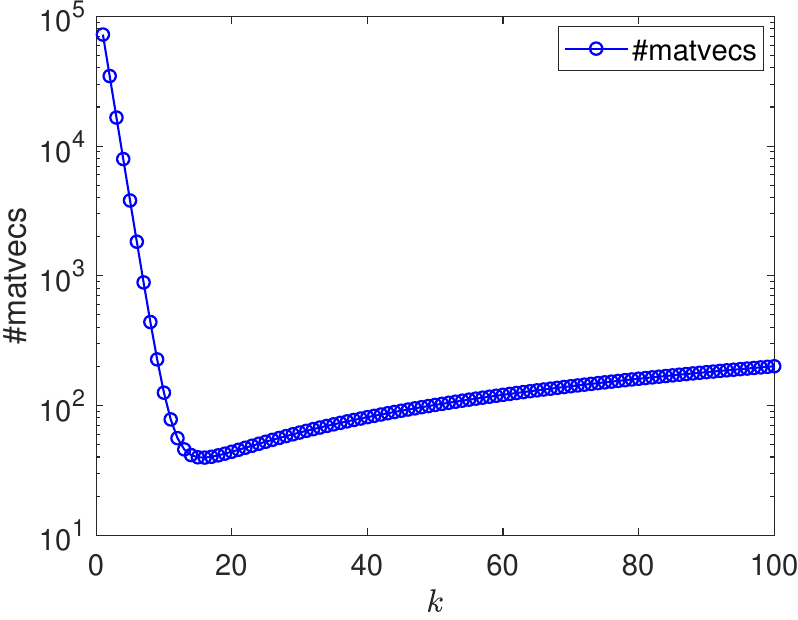}\label{fig:subfig-exp-min}
    \end{minipage}
    }%
    \hspace{1em}
    \subfigure[step]{
    \begin{minipage}[t]{0.45\linewidth}
    \includegraphics[width=1\textwidth]{./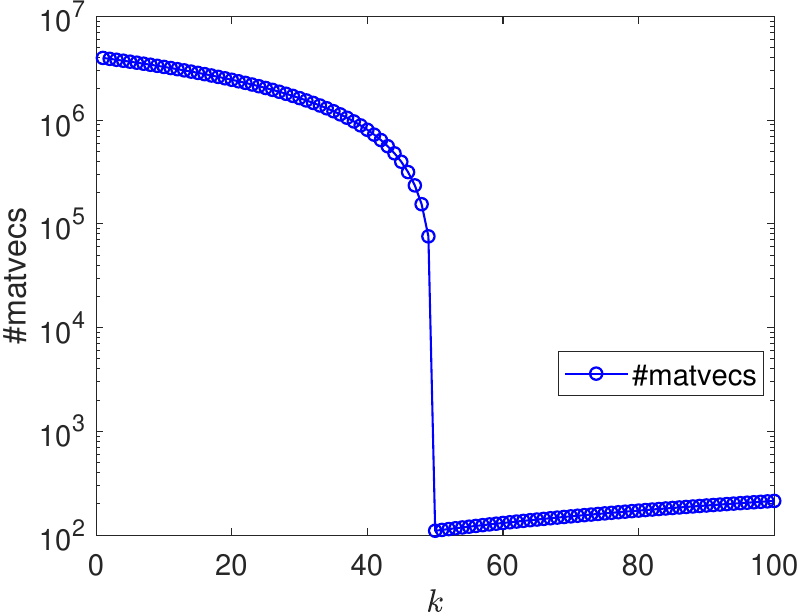}\label{fig:subfig-step-min}
    \end{minipage}
    }%
\centering
\caption{The number of matrix-vector multiplications required for diagonal estimation for matrices with different eigenvalue decay patterns}
\label{fig:min_matvecs}
\end{figure}

Although $\boldsymbol{Q}$ may not achieve the above ideal scenario, the trend of $\#\mathrm{matvecs}$ with respect to the number of columns in $\boldsymbol{Q}$ is consistent with the behavior illustrated in Figure~\ref{fig:min_matvecs}. To select the optimal value of $k$, we refer to the minimum point detection method described in \cite{persson2022improved,chen2023krylov},
\begin{equation}
\label{eq:min_matvecs}
\#\mathrm{matvecs}(k) \geq \#\mathrm{matvecs}(k-1) \geq \#\mathrm{matvecs}(k-2).
\end{equation}
When $\#\mathrm{matvecs}$ increases twice consecutively as $k$ increases, we consider the current $k$ to be near the optimal $k^{*}$ that minimizes $\#\mathrm{matvecs}$. This process of incrementally increasing the number of columns of $\boldsymbol{Q}$ and determining the minimum $\#\mathrm{matvecs}$ is outlined in lines $3$ to $11$ of Algorithm~\ref{algorithm:Adaptive_algorithm}.

\begin{algorithm}
	\caption{Adaptive stochastic diagonal estimation for matrix $\boldsymbol{A}$ (based on RSVD)}
	\label{algorithm:Adaptive_algorithm}
	\begin{algorithmic}[1]
		\STATE{\textbf{Input} Matrix $\boldsymbol{A}\in \mathbb{R}^{n\times n}$, cost function $g$, error tolerance $\varepsilon>0$, failure probability $\delta\in (0,1)$}
		\STATE{\textbf{Initialize} $k=0$, $m_{0}=\infty$, $s =0$, $\mathrm{Temp}\_\mathrm{fro} = 0$, $\mathrm{Temp}\_\mathrm{spe}=0$, $\boldsymbol{Q}=[]$, $\boldsymbol{mole}=\mathrm{zeros}(n,1)$, $\boldsymbol{deno} =\mathrm{zeros}(n,1)$}
		\WHILE{the minimum of $\#\mathrm{matvecs}$ is not detected}
		\STATE{$k=k+1$}
		\STATE{Sample Gaussian random vectors $\boldsymbol{x}_{k}\in \mathbb{R}^{n}$}
		\STATE{$\boldsymbol{y}_{k}=\boldsymbol{A}\boldsymbol{x}_{k}$, $\hat{\boldsymbol{q}}_{k}=(\boldsymbol{I}-\boldsymbol{Q}\boldsymbol{Q}^{\mathrm{T}})\boldsymbol{y}_{k}$}
		\STATE{$\boldsymbol{q}_{k}=\hat{\boldsymbol{q}}_{k}/\|\hat{\boldsymbol{q}}_{k}\|_{2}$}
		\STATE{$\boldsymbol{Q}=[\boldsymbol{Q}\quad \boldsymbol{q}_{k}]$}
		\STATE{$\boldsymbol{d}_{\mathrm{defl}}=\boldsymbol{d}_{\mathrm{defl}}+\mathrm{diag}((\boldsymbol{A}\boldsymbol{q}_{k})\boldsymbol{q}_{k}^{\mathrm{T}})$}
		\STATE{Compute $\#\mathrm{matvecs}$ (by Eq. \eqref{eq:matvecs_function_2})}
		\ENDWHILE
		\WHILE{$m_{s}>s$}
		\STATE{$s=s+1, \alpha_{s}=\mathrm{sup}\left\{\alpha\in (0,1): \frac{\gamma(s/2, \alpha s/2)}{\Gamma(s/2)}\leq \delta\right\}$}
		\STATE{Sample Gaussian random vector $\boldsymbol{\omega}_{s}\in \mathbb{R}^{n}$}
		\STATE{$\boldsymbol{z}_{s} = \boldsymbol{A}((\boldsymbol{I}-\boldsymbol{Q}\boldsymbol{Q}^{\mathrm{T}})\boldsymbol{\omega}_{s})$}
		\STATE{Update $\boldsymbol{mole}=\boldsymbol{mole}+\boldsymbol{\omega}_{s}\odot\boldsymbol{z}_{s}$, and $ \boldsymbol{deno}=\boldsymbol{deno}+\boldsymbol{\omega}_{s}\odot\boldsymbol{\omega}_{s}$}
		\STATE{$\boldsymbol{d}_{\mathrm{rem}} = \boldsymbol{mole}\oslash\boldsymbol{deno}$}
		\STATE{Update $\mathrm{Temp}\_\mathrm{fro}=\mathrm{Temp}\_\mathrm{fro}+\|\boldsymbol{z}_{s}\|_{2}^{2}$, and $\mathrm{Temp}\_\mathrm{spe}=\|\boldsymbol{d}_{\mathrm{rem}}\|_{2}^{2}$}
		\STATE{$m_{s}=g\left(\sqrt{\frac{1}{s\alpha_{s}}\mathrm{Temp}\_\mathrm{fro}-\mathrm{Temp}\_\mathrm{spe}}\right)$}
		\ENDWHILE
		\RETURN{$\boldsymbol{d}_{\mathrm{defl}}+\boldsymbol{d}_{\mathrm{rem}}$}
	\end{algorithmic}
\end{algorithm}

The second challenge in the adaptive diagonal estimation method is computing $\#\mathrm{matvecs}$ (as in line $10$ of Algorithm~\ref{algorithm:Adaptive_algorithm}), particularly the calculation or approximation of $\|\boldsymbol{B}_{\text{off}}\|_{F}^{2}$. This is crucial for identifying the optimal number of columns in $\boldsymbol{Q}$ based on the relationship between $k$ and $\#\mathrm{matvecs}$, as well as determining the number of query vectors $m$ in stochastic diagonal estimation. These two key challenges will be addressed separately in the following sections.

\subsection{Determination of the optimal number of columns in $\boldsymbol{Q}$}

Based on the criterion in Eq. \eqref{eq:min_matvecs}, it is clear that the minimum value of $\#\mathrm{matvecs}$ corresponds to the optimal number of columns in $\boldsymbol{Q}$. In practice, an approximate estimate of $\#\mathrm{matvecs}$ is sufficient for determining the optimal value of $k$ using this criterion. Consequently, we adopt a relatively loose approximation for $\|\boldsymbol{B}_{\text{off}}\|_{F}$ in our calculations. Specifically,
\begin{equation}
	\label{eq:bar_B_Fnorm}
	\begin{array}{ll}
	\left\|\boldsymbol{B}_{\text{off}}\right\|_{F}^{2} &=  \left\|\boldsymbol{B}\right\|_{F}^{2}-\left\|\mathrm{diag}(\boldsymbol{B})\right\|_{2}^{2} \\
	& = \left\|\boldsymbol{A}(\boldsymbol{I}-\boldsymbol{Q}\boldsymbol{Q}^{\mathrm{T}})\right\|_{F}^{2} - \left\|\mathrm{diag}(\boldsymbol{A}(\boldsymbol{I}-\boldsymbol{Q}\boldsymbol{Q}^{\mathrm{T}}))\right\|_{2}^{2} \\
	& = \left\|\boldsymbol{A}\right\|_{F}^{2} -\left\|\boldsymbol{A}\boldsymbol{Q}\right\|_{F}^{2}-\left\|\mathrm{diag}(\boldsymbol{A})-\mathrm{diag}(\boldsymbol{A}\boldsymbol{Q}\boldsymbol{Q}^{\mathrm{T}})\right\|_{2}^{2}.
	\end{array}
\end{equation}
Although $\|\boldsymbol{A}\|_{F}^{2}$ is difficult to calculate directly, it can be considered as a constant since its value is unaffected by changing $k$. Additionally,
$$\left\|\boldsymbol{A}\boldsymbol{Q}\right\|_{F}^{2}=\mathrm{tr}\left((\boldsymbol{A}\boldsymbol{Q})^{\mathrm{T}}(\boldsymbol{A}\boldsymbol{Q})\right)=\sum_{i=1}^{k}\left\|\boldsymbol{A}\boldsymbol{q}_{i}\right\|_{2}^{2},$$ 
where $\boldsymbol{q}_{i}$ denotes the $i$-th column of $\boldsymbol{Q}$, i.e. $[\boldsymbol{Q}]_{:,i}$. Thus, the value of $\|\boldsymbol{A}\boldsymbol{Q}\|_{F}^{2}$ updates with an increasing $k$.

For any matrix $\boldsymbol{X}\in \mathbb{R}^{n\times n}$, $\|\mathscr{D}(\boldsymbol{X})\|_{F}^{2}=\|\mathrm{diag}(\boldsymbol{X})\|_{2}^{2}$. Therefore,
\begin{equation*}
	\left\|\mathrm{diag}(\boldsymbol{A})-\mathrm{diag}(\boldsymbol{A}\boldsymbol{Q}\boldsymbol{Q}^{\mathrm{T}})\right\|_{2}^{2}=\left\|\mathscr{D}(\boldsymbol{A})-\mathscr{D}(\boldsymbol{A}\boldsymbol{Q}\boldsymbol{Q}^{\mathrm{T}})\right\|_{F}^{2}.
\end{equation*}
For an ideal column orthonormal matrix $\boldsymbol{Q}$, $\boldsymbol{A}\succeq \boldsymbol{A}\boldsymbol{Q}\boldsymbol{Q}^{\mathrm{T}}\succeq \mathbf{O}$, implying $\mathscr{D}(\boldsymbol{A})\succeq \mathscr{D}(\boldsymbol{A}\boldsymbol{Q}\boldsymbol{Q}^{\mathrm{T}})\succeq \mathbf{O}$. Based on Lem. 2.3 in \cite{persson2023algorithm},
\begin{equation*}
	\left\|\mathscr{D}(\boldsymbol{A})-\mathscr{D}(\boldsymbol{A}\boldsymbol{Q}\boldsymbol{Q}^{\mathrm{T}})\right\|_{F}^{2} \leq \left\|\mathscr{D}(\boldsymbol{A})\right\|_{F}^{2} - \left\|\mathscr{D}(\boldsymbol{A}\boldsymbol{Q}\boldsymbol{Q}^{\mathrm{T}})\right\|_{F}^{2},
\end{equation*}
we take $\|\mathrm{diag}(\boldsymbol{A})\|_{2}^{2} -\|\mathrm{diag}(\boldsymbol{A}\boldsymbol{Q}\boldsymbol{Q}^{\mathrm{T}})\|_{2}^{2}$ as the approximate upper bound of $\|\mathrm{diag}(\boldsymbol{A})-\mathrm{diag}(\boldsymbol{A}\boldsymbol{Q}\boldsymbol{Q}^{\mathrm{T}})\|_{2}^{2}$ and consider $\|\mathrm{diag}(\boldsymbol{A})\|_{2}^{2}$ as a constant since it is not affected $k$.

In total, we approximate $\#\mathrm{matvecs}$ as
$$2k+g\left(\sqrt{C-\left\|\boldsymbol{A}\boldsymbol{Q}\right\|_{F}^{2}+\left\|\mathrm{diag}(\boldsymbol{A}\boldsymbol{Q}\boldsymbol{Q}^{\mathrm{T}})\right\|_{2}^{2}}\right),$$
where $C$ is a constant\footnote{$C$ can be set to $0$, in which case $2k+g\left(\sqrt{\left\|\boldsymbol{A}\boldsymbol{Q}\right\|_{F}^{2}-\left\|\mathrm{diag}(\boldsymbol{A}\boldsymbol{Q}\boldsymbol{Q}^{T})\right\|_{2}^{2}}\right)$ is maximized.}. Thus, the approximate minimum value of $\#\mathrm{matvecs}$ can be determined without additional computation.

\subsection{Determination of the number of query vector and $\left\|\boldsymbol{B}_{\text{off}}\right\|_{F}$}

 To meet the lower bound of $m$ presented by Eq. \eqref{eq:m_bound_our}, the estimation of $\|\boldsymbol{B}_{\text{off}}\|_{F}$ should be as accurate as possible to minimize computational costs. Based on Eq. \eqref{eq:bar_B_Fnorm},
\begin{equation*}
	\label{eq:determine_m_value}
	\left\|\boldsymbol{B}_{\text{off}}\right\|_{F}^{2}=\left\|\boldsymbol{A}(\boldsymbol{I}-\boldsymbol{Q}\boldsymbol{Q}^{\mathrm{T}})\right\|_{F}^{2} - \left\|\mathrm{diag}(\boldsymbol{A}(\boldsymbol{I} -\boldsymbol{Q}\boldsymbol{Q}^{\mathrm{T}}))\right\|_{2}^{2}.
\end{equation*}
We will discuss these two terms, $\|\cdot\|_{F}^{2}$ and $\|\cdot\|_{2}^{2}$, in the above formula separately.

Firstly, directly computing the norm value of $\|\boldsymbol{A}(\boldsymbol{I}-\boldsymbol{Q}\boldsymbol{Q}^{\mathrm{T}})\|_{F}$ is costly. The following lemma shows that this norm value can be approximated with a few matrix-vector multiplications.

\begin{lemma}[\cite{persson2022improved}, Lem. 2.2]
	\label{lemma:matrix_Fnorm_approx}
	For any matrix $\boldsymbol{B}\in \mathbb{R}^{n\times n}$, let $\boldsymbol{\Omega} \in \mathbb{R}^{n\times k}$ be a standard Gaussian random matrix. For $\alpha \in (0,1)$, we have
	\begin{equation}
		\mathbb{P}\left\{\frac{1}{k}\left\|\boldsymbol{B}\boldsymbol{\Omega}\right\|_{F}^{2}<\alpha \left\|\boldsymbol{B}\right\|_{F}^{2}\right\}\leq \mathbb{P}\left\{X<\alpha\right\}=\frac{\gamma(k/2,\alpha k/2)}{\Gamma(k/2)},
	\end{equation}
where random variable $X$ follows the gamma distribution with shape and scale parameters $k/2$, $\gamma(s,x):= \int_{0}^{x}t^{s-1}\mathrm{e}^{-t}\mathrm{d}t$ is the lower incomplete Gamma function, and $\Gamma(s)$ is the standard Gamma function.
\end{lemma}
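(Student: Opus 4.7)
The plan is to diagonalize $\boldsymbol{B}$ via its SVD, to express $\|\boldsymbol{B}\boldsymbol{\Omega}\|_F^2/\|\boldsymbol{B}\|_F^2$ as a probability-weighted combination of independent $\chi_k^2$ variables, and then to compare its lower tail against that of a single $\chi_k^2$ by a Schur-convexity argument on the simplex of mixture weights.

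First I would write $\boldsymbol{B}=\boldsymbol{U}\boldsymbol{\Sigma}\boldsymbol{V}^{\mathrm{T}}$ and observe that, because the i.i.d.\ standard Gaussian ensemble is rotationally invariant, $\boldsymbol{V}^{\mathrm{T}}\boldsymbol{\Omega}$ has the same distribution as $\boldsymbol{\Omega}$. Hence we may assume $\boldsymbol{B}=\mathrm{diag}(\sigma_1,\ldots,\sigma_n)$, which gives
$$\|\boldsymbol{B}\boldsymbol{\Omega}\|_F^2=\sum_{i=1}^n\sigma_i^2 Y_i,\quad\text{with}\quad Y_i:=\sum_{j=1}^k\Omega_{ij}^2\sim\chi_k^2\ \text{i.i.d.}$$
Setting $p_i:=\sigma_i^2/\|\boldsymbol{B}\|_F^2$ (so that $p_i\ge 0$ and $\sum_i p_i=1$), the inequality to prove becomes
$$\mathbb{P}\Big\{\tfrac{1}{k}\sum_{i=1}^n p_iY_i<\alpha\Big\}\;\le\;\mathbb{P}\{\chi_k^2/k<\alpha\}.$$

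The crux of the proof is to show that, among all probability vectors $p$ on the simplex, the lower-tail CDF $p\mapsto F(p):=\mathbb{P}\{\sum_i p_iY_i<c\}$ attains its maximum at a vertex, where $F(p)$ equals $\mathbb{P}\{\chi_k^2<c\}$. I would establish this by proving Schur-convexity of $F$: by symmetry and by conditioning on $\{Y_i\}_{i\ge 3}$, it suffices to show that $h(a,b):=\mathbb{P}\{aY_1+bY_2<c'\}$, viewed as a function of $(a,b)$ with $a+b$ held fixed, is non-decreasing in $|a-b|$. This reduces to a one-dimensional convolution-monotonicity fact that I would verify by differentiating $h$ under the explicit density of the convolution $aY_1+bY_2$ of two scaled $\chi_k^2$ variables. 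I expect this convolution-monotonicity step to be the principal technical obstacle. As an independent sanity check in the Laplace domain, expanding the product and discarding non-negative cross terms yields $\prod_i(1+2tp_i)\ge 1+2t$, whence $\mathbb{E}[e^{-t\sum_i p_iY_i}]\le\mathbb{E}[e^{-tY_1}]$ for every $t>0$, which is consistent with a lower-tail ordering of $\sum_i p_iY_i$ above $\chi_k^2$. Once Schur-convexity is established, the proof concludes by identifying $\chi_k^2/k\sim\mathrm{Gamma}(k/2,2/k)$, whose CDF evaluated at $\alpha$ is exactly $\gamma(k/2,\alpha k/2)/\Gamma(k/2)$, giving the stated bound.
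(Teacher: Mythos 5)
The paper does not give its own proof of this lemma---it is cited verbatim as Lemma 2.2 of \cite{persson2022improved}---so I am assessing your argument on its own terms. Your opening reduction is correct and standard: the SVD plus rotation invariance of the Gaussian ensemble let you take $\boldsymbol{B}$ diagonal, and the reformulation $\|\boldsymbol{B}\boldsymbol{\Omega}\|_F^2/\|\boldsymbol{B}\|_F^2=\sum_i p_iY_i$ with $Y_i\sim\chi_k^2$ i.i.d.\ and $p$ on the simplex is right, as is the identification $\gamma(k/2,\alpha k/2)/\Gamma(k/2)=\mathbb{P}\{\chi_k^2<\alpha k\}=\mathbb{P}\{\chi_k^2/k<\alpha\}$.

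The gap is in the Schur-convexity step. You claim that ``by conditioning on $\{Y_i\}_{i\geq 3}$, it suffices to show $h(a,b;c'):=\mathbb{P}\{aY_1+bY_2<c'\}$ is non-decreasing in $|a-b|$ with $a+b$ fixed.'' That premise fails for exactly the values of $(a+b,c')$ that conditioning actually produces: after conditioning, $c'$ is at most $c=\alpha k$, but $a+b=p_1+p_2$ can be substantially less than $1$, so the \emph{effective} threshold $c'/(a+b)$ can exceed $k$ even though $\alpha<1$. Concretely, with $k=1$, $c'=0.9$, and $a+b=0.5$ one has $h(0.5,0;0.9)=\mathbb{P}\{\chi_1^2<1.8\}\approx 0.820$ while $h(0.25,0.25;0.9)=\mathbb{P}\{\chi_2^2<3.6\}=1-e^{-1.8}\approx 0.835$, so the more unequal pair gives the \emph{smaller} lower-tail probability. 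This configuration arises with positive probability (e.g.\ $\alpha=0.9$, $k=1$, $p=(0.25,0.25,0.5)$, $Y_3$ near $0$), so the conditional pairwise-transfer monotonicity you want to differentiate your way to is simply false, and the chain of reductions needed to conclude Schur-convexity of $F$ on the whole simplex breaks. (The lemma itself is true, and numerically $F$ even appears Schur-convex for $c<k$; the problem is that your reduction to two coordinates by conditioning destroys the global constraint $\sum_i p_i=1$ that makes the claim true, so a correct proof must retain that constraint throughout.) Finally, your Laplace-transform computation is correct but, as you say, is only a necessary-condition check: $\mathbb{E}[e^{-t\sum p_iY_i}]\leq\mathbb{E}[e^{-t\chi_k^2}]$ for all $t>0$ is implied by, but does not imply, a lower-tail ordering.
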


According to the \cref{lemma:matrix_Fnorm_approx}, for any given probability $\delta\in (0,1)$, if suitable values of $k$ and $\alpha$ are found such that $\gamma(k/2,\alpha k/2)/\Gamma(k/2)\leq \delta$, then
\begin{equation*}
    \mathbb{P}\left\{\frac{1}{k\alpha}\left\|\boldsymbol{A}(\boldsymbol{I}-\boldsymbol{Q}\boldsymbol{Q}^{\mathrm{T}})\boldsymbol{\Omega}\right\|_{F}^{2}\geq \left\|\boldsymbol{A}(\boldsymbol{I}-\boldsymbol{Q}\boldsymbol{Q}^{\mathrm{T}})\right\|_{F}^{2}\right\}\geq 1-\delta.
\end{equation*}
Thus, the Frobenius norm of $\boldsymbol{A}(\boldsymbol{I}-\boldsymbol{Q}\boldsymbol{Q}^{\mathrm{T}})$ can be estimated by applying a Gaussian random matrix $\boldsymbol{\Omega}$. 

Secondly, since $\mathrm{diag}(\boldsymbol{A}(\boldsymbol{I}-\boldsymbol{Q}\boldsymbol{Q}^{\mathrm{T}}))$ is estimated using a stochastic diagonal estimator, this computation also involves generating random vectors to estimate $\boldsymbol{A}(\boldsymbol{I}-\boldsymbol{Q}\boldsymbol{Q}^{\mathrm{T}})$. To minimize the total number of matrix-vector multiplications, the stochastic diagonal estimation of $\boldsymbol{A}(\boldsymbol{I}-\boldsymbol{Q}\boldsymbol{Q}^{\mathrm{T}})$ can share the same random vectors used in the Frobenius norm estimation. This approach not only ensures the required random operations but also reduces computational overhead. Furthermore, as the number of random vectors increases in the stochastic diagonal estimation process, the value of $(k\alpha)^{-1}\|\boldsymbol{A}(\boldsymbol{I}-\boldsymbol{Q}\boldsymbol{Q}^{\mathrm{T}})\boldsymbol{\Omega}\|_{F}^{2}$ gradually decreases and converges to $\|\boldsymbol{A}(\boldsymbol{I}-\boldsymbol{Q}\boldsymbol{Q}^{\mathrm{T}})\|_{F}^{2}$ \cite{persson2022improved}. At this stage, the estimated number of query vectors $m$ also gradually decreases and eventually converges. Therefore, when estimating both $\|\boldsymbol{A}(\boldsymbol{I}-\boldsymbol{Q}\boldsymbol{Q}^{\mathrm{T}})\|_{F}^{2}$ and $\mathrm{diag}(\boldsymbol{A}(\boldsymbol{I}-\boldsymbol{Q}\boldsymbol{Q}^{\mathrm{T}}))$, utilizing shared matrix-vector multiplications allows the termination condition to be set once the number of random vectors exceeds the theoretical value of the updated query vectors $m$. This strategy minimizes the number of random vectors required in the process.

Finally, for the computation of $\|\boldsymbol{B}_{\text{off}}\|_{F}^{2}$, the term $\|\mathrm{diag}(\boldsymbol{A}(\boldsymbol{I}-\boldsymbol{Q}\boldsymbol{Q}^{\mathrm{T}}))\|_{2}^{2}$ can be derived by using the estimated values of $\mathrm{diag}(\boldsymbol{A}(\boldsymbol{I-\boldsymbol{Q}\boldsymbol{Q}^{\mathrm{T}}}))$ from the stochastic diagonal estimation. The procedures for estimating $m$ and $\mathrm{diag}(\boldsymbol{A}(\boldsymbol{I}-\boldsymbol{Q}\boldsymbol{Q}^{\mathrm{T}}))$ are implemented in lines $12$ to $20$ of Algorithm~\ref{algorithm:Adaptive_algorithm}.

The complete process is outlined in Algorithm~\ref{algorithm:Adaptive_algorithm}. Since Algorithm~\ref{algorithm:Adaptive_algorithm} builds on Algorithm~\ref{algorithm:Prototype_algorithm_based_rsvd} and it incorporates an adaptive selection of the projection subspace size to minimize overall computational cost, it is referred to as the adaptive stochastic diagonal estimation algorithm.

\section{Numerical results}
\label{sec:section4}
 In this section, we test the effectiveness of the proposed adaptive stochastic diagonal estimation method through numerical experiments. The experiments are conducted on three levels. First, we apply the method to matrices with different eigenvalue decay patterns. Second, we compare the performance with several related stochastic diagonal estimation methods. Finally, we evaluate the performance of the proposed adaptive diagonal estimation algorithm through specific examples in practical applications.

 The numerical experiments were conducted in MATLAB R2020a, with all probability bounds set to $\delta=0.01$ and the query vector generated from Gaussian distribution. The accuracy of the diagonal estimation results (relative error) is measured as follows,
 \begin{equation*}
	\frac{\left\|\mathrm{diag}(\boldsymbol{A})-\mathrm{EST}_{\mathrm{diag}(\boldsymbol{A})}\right\|_{2}}{\left\|\mathrm{diag}(\boldsymbol{A})\right\|_{2}},
\end{equation*}
where $\mathrm{EST}_{\mathrm{diag}(\boldsymbol{A})}$ denotes the estimation of $\mathrm{diag}(\boldsymbol{A})$ obtained by different methods.

In the numerical experiments, the symbol $Ada\_diag$ represents the adaptive stochastic diagonal estimation algorithm proposed in this paper.

\subsection{Experimental results on matrices of different eigenvalue decay}
\begin{table}[htbp]
	\centering
	\caption{Computational cost of the adaptive stochastic diagonal estimator under different eigenvalue decays}
	\label{table:adaptive_result_matrix}
        \begin{threeparttable}[b]
	\begin{tabular}{@{}c c c c c c c c c c c c c@{}}
		\toprule
		\multirow{2}{*}{\textbf{p}} & \multicolumn{3}{c}{\textbf{flat}} & \multicolumn{3}{c}{\textbf{poly}} & \multicolumn{3}{c}{\textbf{exp}} & \multicolumn{3}{c}{\textbf{step}} \\
		\cmidrule(lr){2-4} \cmidrule(lr){5-7} \cmidrule(lr){8-10} \cmidrule(lr){11-13}
		& $k$ & $m$ & $\# \mathrm{mv}$\tnote{*} & $k$ & $m$ & $\# \mathrm{mv}$ & $k$ & $m$ & $\# \mathrm{mv}$ & $k$ & $m$ & $\# \mathrm{mv}$  \\
		\midrule
		2 & 3 & 48& 54 & 35& 27 & 97 & 23 & 7 & 53 & 63 & 26 & 152 \\
		3 & 3 & 162 & 168& 50& 34 & 134 & 25 & 7 & 57 & 73 & 45 & 191 \\
		4 & 3 & 636 & 642 & 71 & 42 & 184 & 27 & 8 & 62 & 95 & 76 & 266 \\
		5 & 3 &2614 & 2620 & 100& 56 & 256 & 30 & 7 & 67 & 135 & 153 & 423 \\
		6 & - & - & - & 139 & 77 & 355 & 32 & 7 & 71 & 211 & 329 & 751 \\
		7 & - & - & - & 195 & 106 & 496 & 34 & 8 & 76 & 331 & 893 & 1555\\
		\bottomrule
	\end{tabular}
    \begin{tablenotes}
    \item[*] \footnotesize{In this table, $\# \mathrm{mv}$ denotes $\# \mathrm{matvecs}$.}
    \end{tablenotes}
 \end{threeparttable}
\end{table}

To clearly present the experimental results, we set the sizes of the matrices with four different types of eigenvalue decay patterns ($flat$, $poly$, $exp$ and $step$) described in \Cref{subsection:adaptive_diagonal_experiment} to $5000\times 5000$ (i.e., $n=5000$).
For matrices with $poly$, $exp$, and $step$ eigenvalue decays, the approximation error bound is set to $\varepsilon\|\mathrm{diag}(\boldsymbol{A})\|_{2}$, where $\varepsilon = 2^{-p},\, p= 2,3,\ldots,7$. For matrices with $flat$ eigenvalue decay, the error bound is similarly $\varepsilon\|\mathrm{diag}(A)\|_{2}$, but limited to $\varepsilon = 2^{-p},\, p= 2,3,\ldots,5$, as for $p\geq 6$, the number of required matrix-vector multiplications $\#\mathrm{matvecs}$ exceeds the matrix dimension. In such cases, the diagonal of $\boldsymbol{A}$ should be computed directly using the deterministic standard basis vector $\boldsymbol{e}_{i}^{\mathrm{T}}\boldsymbol{A}\boldsymbol{e}_{i}$.

For each $(\varepsilon,\delta)$ probabilistic error bound, the adaptive diagonal estimation (Algorithm \ref{algorithm:Adaptive_algorithm}) is applied to matrices with varying eigenvalue decays. Each experiment is repeated for $20$ times, and the results are averaged. The results are presented in Table \ref{table:adaptive_result_matrix}.

The results indicate that for $exp$-type matrices, due to the rapid eigenvalue decay and their concentration around zero, only a small number of query vectors are required to meet the specified probabilistic bounds. A similar trend is observed for $poly$-type matrices, where the eigenvalues also decay quickly, with the smallest approaching zero. Hence, diagonal estimation requires few query vectors in these cases, though the size of the projected subspace increases with tighter precision requirements. 

For $step$-type matrices, the results show that the projected subspace size is consistently greater than $50$ (i.e., the point where the eigenvalue gap appears), confirming that the projected subspace dimension for these matrices must exceed a certain threshold for accurate estimation.

Finally, for matrices with $flat$ eigenvalue decay, stochastic diagonal estimation is more efficient when directly performed on the original matrix without projection. In Table \ref{table:adaptive_result_matrix}, the projected subspace size for $flat$-type matrices is $3$, which relates to the method described in Eq. \eqref{eq:min_matvecs} for determining the minimum point of $\#\mathrm{matvecs}$.

\subsection{Comparison with several related diagonal estimation algorithms}
\label{subsection:comparison_Gaussion}

\begin{figure}[htbp]
    \subfigure[flat]{
    \begin{minipage}[t]{0.45\linewidth}
    \includegraphics[width=1\textwidth]{./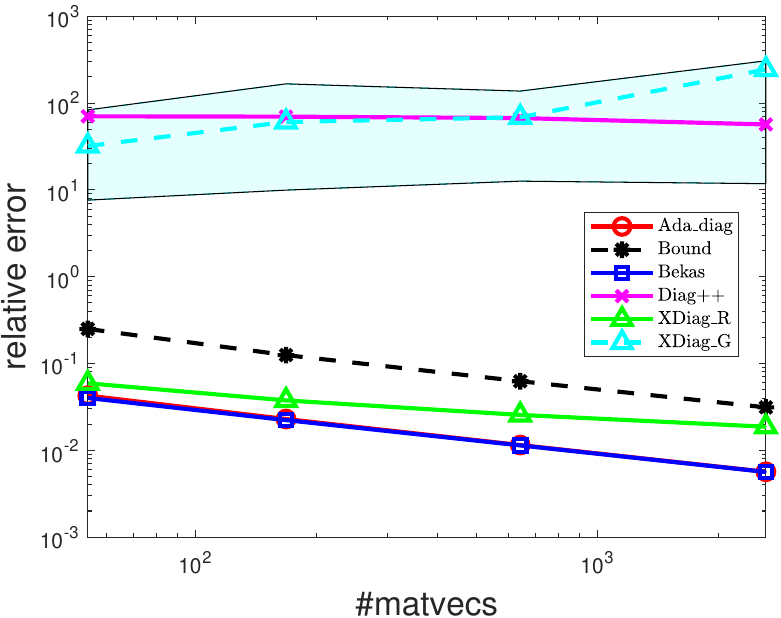}\label{fig:compare_four_flat}
    \end{minipage}%
    }%
    \hspace{1em}
    \subfigure[poly]{
    \begin{minipage}[t]{0.45\linewidth}
    \includegraphics[width=1\textwidth]{./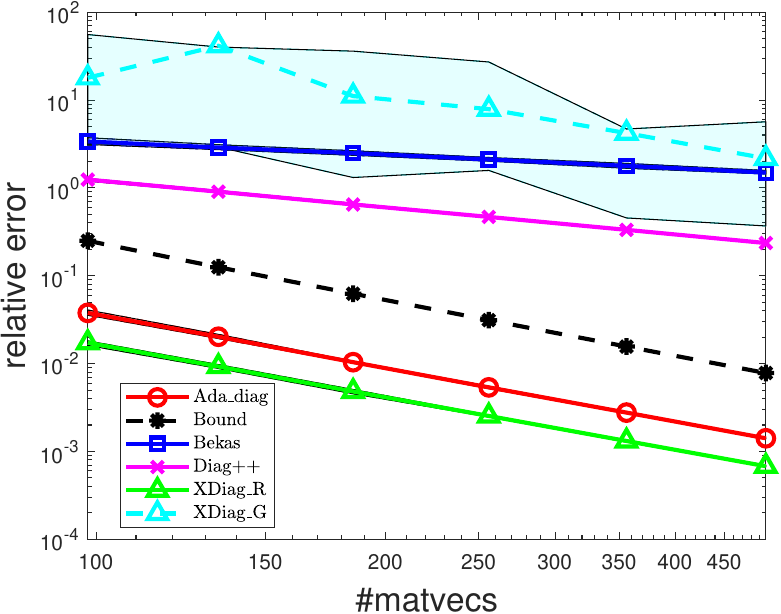}\label{fig:compare_four_poly}
    \end{minipage}%
    }%
    \vspace{1em}
    \subfigure[exp]{
    \begin{minipage}[t]{0.45\linewidth}
    \includegraphics[width=1\textwidth]{./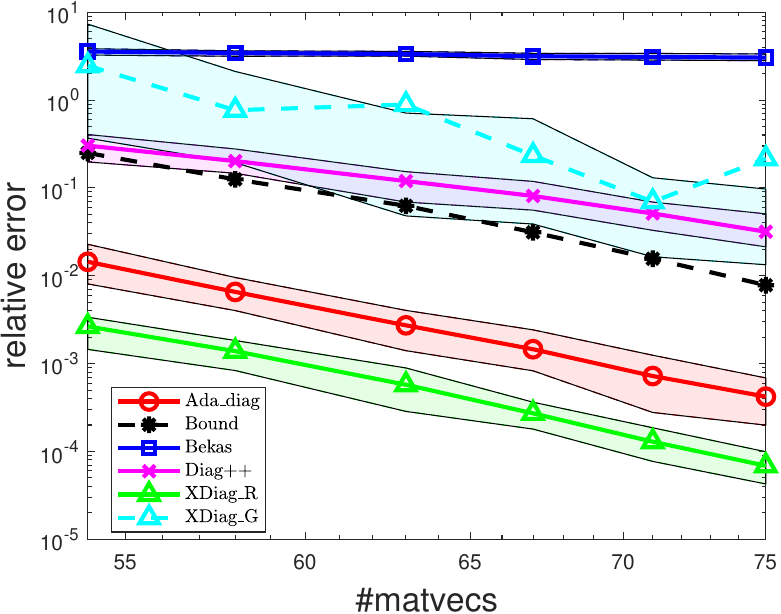}\label{fig:compare_four_exp}
    \end{minipage}
    }%
    \hspace{1em}
    \subfigure[step]{
    \begin{minipage}[t]{0.45\linewidth}
    \includegraphics[width=1\textwidth]{./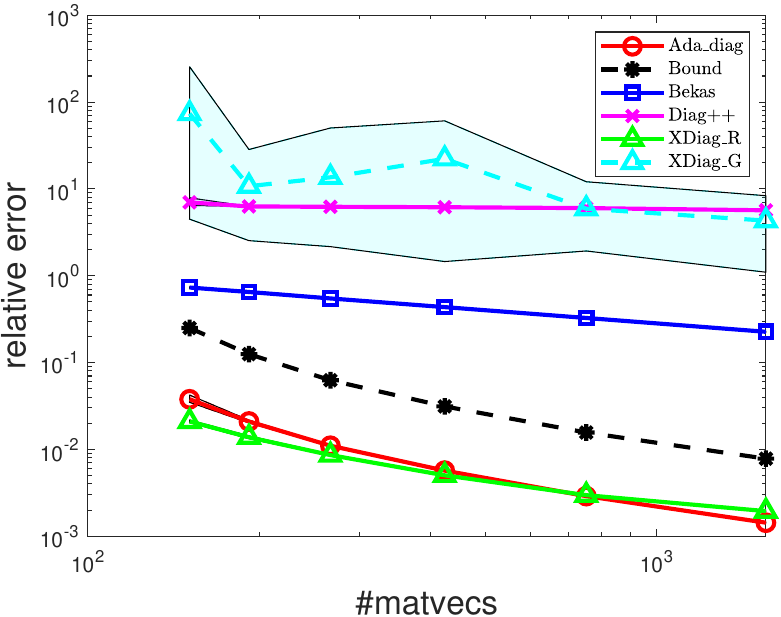}\label{fig:compare_four_step}
    \end{minipage}
    }%
\centering
\caption{Comparison of different diagonal estimation methods under different spectral distributions}
\label{fig:adaptive_compare_four_matrix}
\end{figure}

We compare the adaptive stochastic diagonal estimator proposed in this paper with the stochastic diagonal estimator described by Eq. \eqref{eq:diagonal_estimator}, which was introduced by Bekas et al. \cite{bekas2007estimator} (referred to as Bekas in numerical experiments), Diag++ \cite{baston2022stochastic}, and XDiag \cite{epperly2024xtrace}. The mathematical form of the Diag++ estimation method reads,
\begin{equation}
	\label{eq:diag++_estimator}
	\begin{aligned}
	\mathrm{Diag}++ = & \, \mathrm{diag}(\boldsymbol{Q}\boldsymbol{Q}^{\mathrm{T}}\boldsymbol{A}\boldsymbol{Q}\boldsymbol{Q}^{\mathrm{T}}) +  \\
	& \left[\sum_{i=1}^{\tilde{m}/3}\boldsymbol{\omega}^{(i)}\odot (\boldsymbol{I}-\boldsymbol{Q}\boldsymbol{Q}^{\mathrm{T}})\boldsymbol{A}(\boldsymbol{I}-\boldsymbol{Q}\boldsymbol{Q}^{\mathrm{T}})\boldsymbol{\omega}^{(i)}\right]\oslash \left[\sum_{i=1}^{\tilde{m}/3}\boldsymbol{\omega}^{(i)}\odot \boldsymbol{\omega}^{(i)}\right],
	\end{aligned}
\end{equation}
where $\boldsymbol{Q}=\mathrm{orth}(\boldsymbol{A}\boldsymbol{\Omega})\in \mathbb{R}^{n\times \frac{\tilde{m}}{3}}$ and $\boldsymbol{\Omega}\in \mathbb{R}^{n \times \frac{\tilde{m}}{3}}$ is random matrix. 

The mathematical form of the XDiag estimation method is,
\begin{equation}
	\label{eq:xdiag_estimator}
	\mathrm{XDiag} = \frac{2}{\tilde{m}}\sum_{i=1}^{\tilde{m}/2}\left[\mathrm{diag}\left(\boldsymbol{Q}_{(i)}\boldsymbol{Q}_{(i)}^{\mathrm{T}}\boldsymbol{A}\right)+\frac{\boldsymbol{\omega}^{(i)}\odot \left(\boldsymbol{I}-\boldsymbol{Q}_{(i)}\boldsymbol{Q}_{(i)}^{\mathrm{T}}\right)(\boldsymbol{A}\boldsymbol{\omega}^{(i)})}{\boldsymbol{\omega}^{(i)}\odot \boldsymbol{\omega}^{(i)}}\right],
\end{equation}
where $\boldsymbol{\Omega}=[\boldsymbol{\omega}^{(1)},\boldsymbol{\omega}^{(2)},\ldots,\boldsymbol{\omega}^{(\tilde{m}/2)}]\in \mathbb{R}^{n\times \frac{\tilde{m}}{2}}$ is random matrix,  $\boldsymbol{Q}_{(i)}=\mathrm{orth}(\boldsymbol{A}\boldsymbol{\Omega}_{-i})$ and $\boldsymbol{\Omega}_{-i}$ denote the matrix obtained by removing the $i$-th column of $\boldsymbol{\Omega}$.

In Eqs. \eqref{eq:diag++_estimator} and \eqref{eq:xdiag_estimator}, $\tilde{m}$ represents the total number of matrix-vector multiplications involving matrix $\boldsymbol{A}$ for each estimation method. This differs from $m$, which denotes the number of query vectors in the adaptive stochastic diagonal estimation algorithm. Specifically, $\tilde{m}$ is equivalent to $\#\mathrm{matvecs}$, where $\tilde{m}=2k+m$. For the Bekas algorithm, represented in Eq. \eqref{eq:diagonal_estimator}, $\#\mathrm{matvecs}$, $\tilde{m}$, and $m$ are the same.

The experimental setup for the approximation error $\varepsilon$ follows settings in the previous subsection. For each matrix eigenvalue decay type and $\varepsilon$, we first run the adaptive stochastic diagonal estimation algorithm (Algorithm \ref{algorithm:Adaptive_algorithm}) and record the total matrix-vector multiplications $\#\mathrm{matvecs}$ required to satisfy the probabilistic error bound $(\varepsilon,\delta)$ (corresponding to $\tilde{m}$ in Eqs. \eqref{eq:diag++_estimator} and \eqref{eq:xdiag_estimator}). Using this $\# \mathrm{matvecs}$, we then sequentially execute the Bekas, Diag++, and XDiag methods. Each experiment is repeated $20$ times for each $\varepsilon$, and the average relative error versus matrix-vector multiplications is presented in Figure \ref{fig:adaptive_compare_four_matrix}. Due to the unstable performance of XDiag when using Gaussian random vectors, we also implemented it with Rademacher random vectors, denoted as XDiag\_G and XDiag\_R, respectively. The results show that XDiag\_G performs significantly worse than XDiag\_R.

Figure \ref{fig:adaptive_compare_four_matrix} demonstrates that the adaptive stochastic diagonal estimation method performs similarly to XDiag\_R, with both yielding satisfactory estimation results. In all experiments, the relative errors for these two methods consistently stay within the specified bounds, meeting the required error criteria.

In Figure \ref{fig:compare_four_flat}, for matrices with relatively flat eigenvalues, the Bekas method achieves the best estimation results, while the Diag++ method performs the worst. This occurs because the projection method aims to reduce the variance of the stochastic diagonal estimator and thus reduce the number of query vectors. However, for matrices without significant eigenvalue decay, the projection method does not provide a variance reduction. As shown in Table \ref{table:adaptive_result_matrix}, the adaptive diagonal estimation method captures the eigenvalue variations of the matrix more effectively. When eigenvalues decay slowly, fewer matrix-vector multiplications are used for projection operations, and most of the multiplications are allocated to stochastic diagonal estimation, leading to results comparable to the Bekas method, both outperforming the XDiag\_R method.

Figures \ref{fig:compare_four_poly} and \ref{fig:compare_four_exp} display the estimation results of matrices with $poly$ and $exp$ eigenvalue decay types under different diagonal estimators. Compared to the Bekas method, which relies solely on stochastic diagonal estimation, the Diag++ method, which combines projection with stochastic diagonal estimation, demonstrates a notable advantage for these matrices due to the high eigenvalue decay rate. The Diag++ method captures these larger eigenvalues through orthogonal projection, projecting the matrix into the subspace spanned by the corresponding eigenvectors. This significantly reduces the Frobenius norm of the remaining matrix, thereby decreasing the variance of the diagonal estimator. However, since Diag++ uses a fixed parameter allocation method (i.e., $\tilde{m}/3$ random query vectors and $\tilde{m}/3$ projection subspace size), the overall estimation performance is not optimal. For these matrices, the adaptive diagonal estimation method proposed in this paper and the XDiag\_R method exhibit better estimation performance than the other methods, with XDiag\_R slightly outperforming the adaptive method.

Figure \ref{fig:compare_four_step} shows that the adaptive diagonal estimation method and XDiag\_R deliver comparable performance, both significantly outperform the other methods. As the estimation accuracy increases, the adaptive method gradually surpasses XDiag\_R.

Tables \ref{table:flat} to \ref{table:step} present the numerical results of each set of experiments, averaged over $20$ repetitions.

\begin{table}[htbp]
	\centering
	\caption{For $flat$-type matrix}
	\label{table:flat}
	\begin{tabular}{@{}c c c c c c c@{}}
		\toprule
		\multicolumn{1}{c}{\textbf{err\_bound}} & \multirow{2}{*}{$\# \mathrm{matvecs}$} & \multicolumn{5}{c}{\textbf{relative error}}\\
		\cmidrule(lr){1-1}  \cmidrule(lr){3-7} 
		$\varepsilon$(p) &  & Ada\_diag & Bekas & Diag++ & XDiag\_R & XDiag\_G \\
		\midrule
		0.25(2)    & 54  & 0.0427 & \textbf{0.0401} & 70.4129 & 0.0592 & 32.0407  \\
		0.125(3)   & 168 & 0.0229 & \textbf{0.0223} & 69.7928 & 0.0376 & 60.5825\\
		0.0625(4)  & 642 & 0.0115 & \textbf{0.0114} & 67.2300 & 0.0256 & 68.6676\\
		0.0313(5)  & 2620 & \textbf{0.0056} & \textbf{0.0056} & 56.7330 & 0.0187 & 244.3222 \\
		\bottomrule
	\end{tabular}
\end{table}

\begin{table}[htbp]
	\centering
	\caption{For $poly$-type matrix}
	\label{table:poly}
	\begin{tabular}{@{}c c c c c c c @{}}
		\toprule
		\multicolumn{1}{c}{\textbf{err\_bound}} & \multirow{2}{*}{$\# \mathrm{matvecs}$} & \multicolumn{5}{c}{\textbf{relative error}}\\
		\cmidrule(lr){1-1}  \cmidrule(lr){3-7} 
		$\varepsilon$(p) &  & Ada\_diag & Bekas & Diag++ & XDiag\_R & XDiag\_G\\
		\midrule
		0.25(2)    & 97  & 0.0378 & 3.3765 & 1.2429 & \textbf{0.0173} & 17.9291  \\
		0.125(3)   & 134 & 0.0202 & 2.8811 & 0.9053 & \textbf{0.0093} & 41.5519 \\
		0.0625(4)  & 184 & 0.0104 & 2.4891 & 0.6483 & \textbf{0.0048} & 11.0926 \\
		0.0313(5)  & 256 & 0.0054 & 2.1072 & 0.4674 & \textbf{0.0025} & 7.8725 \\
		0.0156(6)  & 355 & 0.0028 & 1.7809 & 0.3330 & \textbf{0.0013} & 4.2034 \\
		0.0078(7)  & 496 & 0.0014 & 1.4958 & 0.2357 & \textbf{0.0007} & 2.1690 \\
		\bottomrule
	\end{tabular}
\end{table}

\begin{table}[htbp]
	\centering
	\caption{For $exp$-type matrix}
	\label{table:exp}
	\begin{tabular}{@{}c c c c c c c @{}}
		\toprule
		\multicolumn{1}{c}{\textbf{err\_bound}} & \multirow{2}{*}{$\# \mathrm{matvecs}$} & \multicolumn{5}{c}{\textbf{relative error}}\\
		\cmidrule(lr){1-1}  \cmidrule(lr){3-7} 
		$\varepsilon$(p) &  & Ada\_diag & Bekas & Diag++ & XDiag\_R & XDiag\_G \\
		\midrule
		0.25(2)    & 53  & 0.0144 & 3.5700 & 0.3027 & \textbf{0.0026} & 2.4611  \\
		0.125(3)   & 57 & 0.0065 & 3.4384 & 0.2021 & \textbf{0.0014} & 0.7670 \\
		0.0625(4)  & 62 & 0.0027 & 3.3408 & 0.1196 & \textbf{0.0006} & 0.8887 \\
		0.0313(5)  & 67 & 0.0015 & 3.1564 & 0.0810 & \textbf{0.0003} & 0.2338\\
		0.0156(6)  & 71 & 0.0007 & 3.0961 & 0.0512 & \textbf{0.0001} & 0.0696 \\
		0.0078(7)  & 75 & 0.0004 & 3.0402 & 0.0317 & \textbf{6.8688E-5} & 0.2154\\
		\bottomrule
	\end{tabular}
\end{table}
\begin{table}[htbp]
	\centering
	\caption{For $step$-type matrix}
	\label{table:step}
	\begin{tabular}{@{}c c c c c c c@{}}
		\toprule
		\multicolumn{1}{c}{\textbf{err\_bound}} & \multirow{2}{*}{$\# \mathrm{matvecs}$} & \multicolumn{5}{c}{\textbf{relative error}}\\
		\cmidrule(lr){1-1}  \cmidrule(lr){3-7} 
		$\varepsilon$(p) &  & Ada\_diag & Bekas & Diag++ & XDiag\_R & XDiag\_G \\
		\midrule
		0.25(2)    & 152  & 0.0377 & 0.7315 & 7.0001 & \textbf{0.0210} & 73.8531 \\
		0.125(3)   & 191 & 0.0209 & 0.6499 & 6.2827 & \textbf{0.0138} & 10.7539 \\
		0.0625(4)  & 266 & 0.0110 & 0.5476 & 6.2283 & \textbf{0.0086} & 13.6142 \\
		0.0313(5)  & 423 & 0.0057 & 0.4351 & 6.1554 & \textbf{0.0050} & 22.2361 \\
		0.0156(6)  & 751 & \textbf{0.0029} & 0.3262 & 6.0147 & 0.0030 & 5.9212 \\
		0.0078(7)  & 1555 & \textbf{0.0014} & 0.2261 & 5.6738 & 0.0019 & 4.3153\\
		\bottomrule
	\end{tabular}
\end{table}

\subsection{Estimation on practical application problems}

In data mining applications, one common problem is counting the number of triangles in a network graph \cite{avron2010counting}. For an undirected graph with adjacency matrix $\boldsymbol{A}$, the number of triangles connected to vertex $i$ can be computed as $[\boldsymbol{A}^{3}]_{i,i}/2$. Specifically, consider the following two matrices in specific applications \cite{persson2022improved}: one is the Wikipedia vote network matrix \footnote{Data source: \url{https://snap.stanford.edu/data/wiki-Vote.html}}, and the other is the arXiv collaboration network matrix \footnote{Data source: \url{https://snap.stanford.edu/data/ca-GrQc.html}}.

For the experimental settings, the relative error parameter is $\varepsilon=2^{-p}, p=2,3,\cdots,7$. For each value of $\varepsilon$, we first execute the adaptive stochastic diagonal estimation algorithm (Algorithm~\ref{algorithm:Adaptive_algorithm}) to estimate the diagonal and record the required number of matrix-vector multiplications ($\#\mathrm{matvecs}$). Then, using the same number of matrix-vector multiplications, we apply the methods of Bekas, Diag++, and XDiag methods.

\begin{figure}[htbp]
    \subfigure[Wikipedia]{
    \begin{minipage}[t]{0.45\linewidth}
    \includegraphics[width=1\textwidth]{./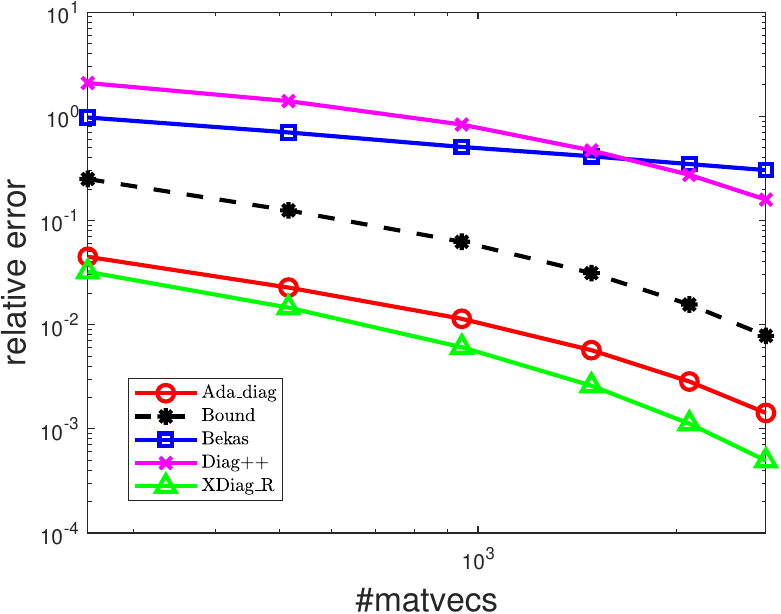}\label{fig:wiki}
    \end{minipage}%
    }%
    \hspace{1em}
    \subfigure[arXiv]{
    \begin{minipage}[t]{0.45\linewidth}
    \includegraphics[width=1\textwidth]{./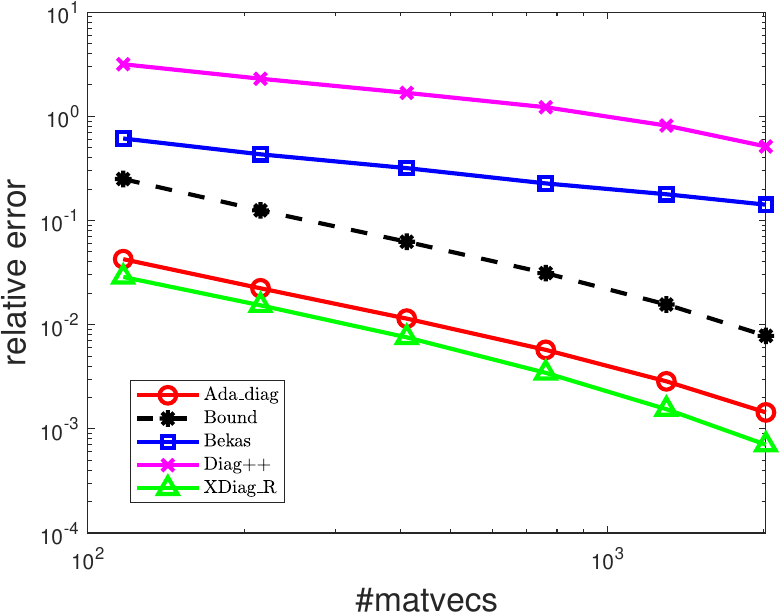}\label{fig:arXiv}
    \end{minipage}%
    }%
\centering
\caption{Comparison of different methods for the number of triangles in undirected graph with respect to Wikipedia and arXiv}
\label{fig:wiki_arXiv}
\end{figure}

\begin{table}[htbp]
	\centering
	\caption{For Wikipedia vote network}
	\label{table:wiki}
	\begin{tabular}{@{}c c c c c c c c @{}}
		\toprule
		\multicolumn{1}{c}{\textbf{err\_bound}} & \multicolumn{3}{c}{ \textbf{assignment} (Ada\_diag)} & \multicolumn{4}{c}{\textbf{relative error}}\\
		\cmidrule(lr){1-1} \cmidrule(lr){2-4} \cmidrule(lr){5-8} 
		$\varepsilon$(p) & $k$ & $m$ & $\# \mathrm{matvecs}$ & Ada\_diag & Bekas & Diag++ & XDiag\_R \\
		\midrule
		0.25(2) & 64 & 124 & 252 & 0.0437& 1.0553 & 1.9638 & \textbf{0.0323} \\
		0.125(3) & 155 & 208 & 518 & 0.0225& 0.6994 & 1.3778 & \textbf{0.0148} \\
		0.0625(4) & 332 & 280 & 944 & 0.0114 & 0.4966 & 0.8355 & \textbf{0.0062} \\
		0.0313(5) & 593 & 306 & 1492 & 0.0059 & 0.4138 & 0.4614 & \textbf{0.0026}\\
		0.0156(6) & 881 & 335 & 2097 & 0.0028 & 0.3400 & 0.2783 & \textbf{0.0011} \\
		0.0078(7) & 1192 & 348 & 2732 & 0.0014 & 0.2934 & 0.1578 & \textbf{0.0005} \\
		\bottomrule
	\end{tabular}
\end{table}

\begin{table}[htbp]
	\centering
	\caption{For arXiv collaboration network}
	\label{table:arXiv}
	\begin{tabular}{@{}c c c c c c c c @{}}
		\toprule
		\multicolumn{1}{c}{\textbf{err\_bound}} & \multicolumn{3}{c}{ \textbf{assignment} (Ada\_diag)} & \multicolumn{4}{c}{\textbf{relative error}}\\
		\cmidrule(lr){1-1} \cmidrule(lr){2-4} \cmidrule(lr){5-8} 
		$\varepsilon$(p) & $k$ & $m$ & $\# \mathrm{matvecs}$ & Ada\_diag & Bekas & Diag++ & XDiag\_R \\
		\midrule
		0.25(2) & 34 & 47 & 115 & 0.0443 & 0.5426 & 3.1312 & \textbf{0.0307} \\
		0.125(3) & 56 & 98 & 210 & 0.0218 & 0.4484 & 2.3472 & \textbf{0.0157} \\
		0.0625(4) & 120 & 169 & 409 & 0.0115 & 0.3094 & 1.7034 & \textbf{0.0077} \\
		0.0313(5) & 225 & 301 & 751 & 0.0058 & 0.2143 & 1.2233 & \textbf{0.0035}\\
		0.0156(6) & 476 & 349 & 1301 & 0.0029 & 0.1673 & 0.8161 & \textbf{0.0015} \\
		0.0078(7) & 791 & 443 & 2025 & 0.0014 & 0.1329 & 0.5120 & \textbf{0.0007} \\
		\bottomrule
	\end{tabular}
\end{table}

Figure~\ref{fig:wiki_arXiv} illustrates the experimental results. The adaptive parameter allocation results for the two example matrices, along with the relative error of the diagonal estimation for various methods, are presented in Table~\ref{table:wiki} and Table~\ref{table:arXiv}. Columns $2$ to $4$ in both tables show the adaptive parameter allocation results of the adaptive diagonal estimation algorithm (Ada\_diag). The XDiag method uses Rademacher random vectors, while all other methods employ Gaussian random vectors.

Overall, the XDiag\_R method shows slightly better estimation performance than the proposed adaptive diagonal estimation method. Both methods meet the specified error-bound requirements and significantly outperform the Bekas and Diag++ methods.

\section{Conclusions}
\label{sec:section5}

To address the challenges of calculating the diagonal of large or implicit matrices, where classical matrix decomposition or directly computing the quadratic form $\boldsymbol{e}_{i}^{\mathrm{T}}\boldsymbol{A}\boldsymbol{e}_{i}$ are either infeasible or computationally expensive, and to overcome the limitations of existing stochastic diagonal estimation methods, such as low efficiency and strong matrix dependency, this paper proposes an adaptive, parameter-optimized stochastic diagonal estimation algorithm. First, we demonstrate through experimental tests and theoretical analysis the effectiveness of projection-based methods in improving the efficiency of stochastic diagonal estimation. We then propose an accurate lower bound for query vectors under given probability and error bounds. Subsequently, we design an adaptive parameter-optimized stochastic diagonal estimation algorithm using approximation techniques combined with a random vector-sharing strategy. Numerical experimental results show that this algorithm is adaptive when handling different matrices. Compared with other stochastic diagonal estimation methods, it significantly improves estimation accuracy and demonstrates stable performance under the same experimental setting.

It is worth noticing that Gaussian random vectors are primarily used in this paper's theoretical analysis and numerical experiments. This choice is based on the fact that in the design of the adaptive algorithm, Gaussian random vectors are used for both matrix Frobenius norm estimation and stochastic diagonal estimation, thereby reducing the computational cost of the algorithm to some extent. However, different types of random vectors have different impacts on estimation accuracy. As shown in the numerical experiments in \Cref{subsection:comparison_Gaussion}, the estimation accuracy of XDiag is significantly better when using Rademacher random vectors than the Gaussian random vectors. Therefore, the analysis and design of adaptive stochastic diagonal estimation algorithms based on Rademacher or other types of random vectors is a direction worth further exploration.


\bibliographystyle{siamplain}
\bibliography{references}
\end{document}